\newtheorem{proposition}{Proposition}
\newtheorem*{proposition1}{Proposition 1}
\newtheorem*{proposition2}{Proposition 2}
\newtheorem{remark}{Remark}
\newtheorem{lemma}{Lemma}
\newcommand{\X}{\mathcal{X}} 
\newcommand{\Y}{\mathcal{Y}} 
\newcommand{\D}{\mathcal{D}} 
\newcommand{\p}{\mathcal{P}} 
\newcommand{\R}{\mathcal{R}} 
\newcommand{\pr}{\mathbb{P}} 
\newcommand{\E}{\mathbb{E}} 
\newcommand{\rvx}{\mathbf{x}} 
\newcommand{\rvy}{\mathbf{y}} 
\newcommand{\vx}{\bm{x}} 
\newcommand{\vy}{\bm{y}} 
\newcommand{\vc}{\bm{c}} 
\newcommand{\vo}{\bm{o}} 
\newcommand{\vp}{\bm{p}}
\definecolor{mplgreen}{rgb}{0.17254901960784313, 0.6274509803921569, 0.17254901960784313}
\definecolor{mplblue}{rgb}{0.12156862745098039, 0.4666666666666667, 0.7058823529411765}
\definecolor{mplorange}{rgb}{1.0, 0.4980392156862745, 0.054901960784313725}
\definecolor{mplred}{rgb}{0.8392156862745098, 0.15294117647058825, 0.1568627450980392}
\newcommand{\orangeline}{{\tikz[baseline=-0.5ex] \draw[mplorange, very thick, solid] (0., 0.) -- (0.3, 0.);}}
\newcommand{\blueline}{{\tikz[baseline=-0.5ex] \draw[mplblue, very thick, solid] (0., 0.) -- (0.3, 0.);}}
\newcommand{\reddashedline}{{\tikz[baseline=-0.5ex] \draw[mplred, very thick, dashed] (0., 0.) -- (0.3, 0.);}}
\newcommand{\dashedline}{{\tikz[baseline=-0.5ex] \draw[black, very thick, dashed, opacity=0.7] (0., 0.) -- (0.3, 0.);}}
\newcommand{\blackline}{{\tikz[baseline=-0.5ex] \draw[black, very thick, solid, opacity=0.7] (0., 0.) -- (0.3, 0.);}}
\title{Fast yet Safe: Early-Exiting with Risk Control} 
\author{%
  Metod Jazbec$^{1,}$\thanks{Equal contributions. Corresponding authors: $<$\texttt{m.jazbec@uva.nl}, \texttt{a.r.timans@uva.nl}$>$} \quad Alexander Timans$^{1,*}$ \quad Tin Hadži Veljković$^1$ \\ \textbf{Kaspar Sakmann}$^2$ \quad \textbf{Dan Zhang}$^2$ \quad
  \textbf{Christian A. Naesseth}$^1$ \quad \textbf{Eric Nalisnick}$^{1, 3}$\\
  $^1$UvA-Bosch Delta Lab, University of Amsterdam \\ $^2$Bosch Center for AI, Robert Bosch GmbH  \quad 
  $^3$Johns Hopkins University
}
\begin{document}


\maketitle

\begin{abstract}
    Scaling machine learning models significantly improves their performance.  However, such gains come at the cost of inference being slow and resource-intensive.  Early-exit neural networks (EENNs) offer a promising solution: they accelerate inference by allowing intermediate layers to `exit' and produce a prediction early.  Yet a fundamental issue with EENNs is how to determine \emph{when} to exit without severely degrading performance.  In other words, when is it `safe' for an EENN to go `fast'?  To address this issue, we investigate how to adapt frameworks of \emph{risk control} to EENNs. Risk control offers a distribution-free, \emph{post-hoc} solution that tunes the EENN's exiting mechanism so that exits only occur when the output is of sufficient quality.  We empirically validate our insights on a range of vision and language tasks, demonstrating that risk control can produce substantial computational savings, all the while preserving user-specified performance goals. 
  
\end{abstract}

\section{Introduction}
\label{sec:intro}

As predictive models continue to grow in size, so do the costs of running them at inference time \citep{bolukbasi2017adaptive, xu2018scaling}.  This presents a challenge to domains ranging from mobile computing to smart appliances to autonomous vehicles -- all of which require models that operate on resource-constrained hardware \citep{saravanan2023advancements, mazumder2021survey, liu2021efficient}.  Even if computation is not limited by hardware, concerns over the energy usage and carbon footprint of large models motivates their efficient implementation \citep{patterson2021carbon, lannelongue2021green}.  Additionally, since computational constraints can  be dynamic, \eg, due to variable loads in web traffic or energy demand, it is desirable for models to be able to adjust their computational needs to changing conditions \citep{scardapane2020should}.  

\emph{Early-exit neural networks} (EENNs) present a simple yet effective approach to such dynamic computation \citep{teerapittayanon2016branchynet, huang2017multi}.  Leveraging the neural network's compositional nature, EENNs can generate predictions at intermediate layers, thereby `exiting' the computation `early' when a stop condition is met.  This early-exit ability has proven useful in settings ranging from vision and language to recommendations (\citep{han2021dynamic}, see \autoref{sec:related}).  Yet the flexibility of EENNs does not come for free: predictions generated at early exits are usually inferior to those produced by the full model.  In turn, a dilemma arises in which the exit condition must balance computational savings with predictive performance.  

In this work, we address the EENN's efficiency \emph{vs.} performance trade-off via statistical frameworks of \emph{risk control} (RC) \citep{angelopoulos2022conformal, bates2021distribution}.  
By tuning the EENN's exiting mechanism based on a user-specified notion of risk, RC aims to enhance the safety of early-exit outputs.  We consider several risks that quantify the difference between the early-exit and full model's outputs, both in terms of prediction quality and uncertainty estimation.  Moreover, we study RC frameworks that control the risk with varying degrees of stringency (\ie, in expectation \emph{vs.} with high probability).  We demonstrate the effectiveness of this light-weight, \emph{post-hoc} solution across a range of tasks, including image classification, semantic segmentation, language modeling, image generation with diffusion, and speculative decoding in large language models.  In particular, we make the following contributions: 
 \begin{itemize}[leftmargin=13pt]
    \item We formalize EENNs as risk-controlling predictors, ensuring risk control is amenable to the early-exit setting by explicitly linking risk control and early-exit requirements (\autoref{prop:crc} \& \autoref{prop:ucb}).
    \item We propose risk functions to control early-exit performance both in terms of model predictions and their underlying predictive distributions (\autoref{eq:risk-gap} \& \autoref{eq:risk-consist}). Previous work has considered only prediction quality \citep{schuster2022confident}, not uncertainty quality (as we do).
    \item We improve upon prior work for language modeling \citep{schuster2022confident}, demonstrating that our adaptions of risk control allow for less conservative early-exiting and result in larger efficiency gains (\autoref{subsec:methods-rcp}, \autoref{sec:exp-calm}).
    \item We apply, for the first time, risk control to early-exiting in image classification, semantic segmentation, and image generation, as well as speculative decoding in large language models (\autoref{sec:exp}).
 \end{itemize}

\section{Background}
\label{sec:background}

\paragraph{Data.}  Let $\X \times \Y$ denote the sample space and assume a data-generating distribution $\p$ over it.  We consider $\Y := \{1, \dots, K\}$ for classification and $\Y \subseteq \mathbb{R}^d$ for regression.  Observed samples from $\p$ are split into disjoint train, calibration and test sets, denoted  $\D_{train}$, $\D_{cal}$, and $\D_{test}$.  We assume samples $(\vx, \vy)$ in $\D_{cal}$ and $\D_{test}$ to be drawn \emph{i.i.d.} from $\p$, whereas $\D_{train}$ is permitted to be drawn randomly from a different distribution (of same support). 



\paragraph{Early-Exit Neural Networks.}  EENNs extend traditional \emph{static} network models by dynamically adjusting computations during the model's forward pass (\eg, the number of evaluated network layers) on the basis of an input sample's complexity or `difficulty'.  More formally, we define an EENN as a sequence of probabilistic classifiers $\hat{p}(\rvy \,|\,  \rvx = \vx ;  \bm{\phi}_{l} , \bm{\theta}_l)$, where $l = 1, \dots, L$ enumerates the model's exit layers, and $\bm{\phi}_l$ and $\bm{\theta}_l$ define the model's classification head and backbone parameters at the $l$-th exit, respectively.  The final index $L$ denotes the full model, \ie, all layers are evaluated for a given input sample $\vx$.  
The obtained predictive distribution $\hat{p}(\rvy \,|\, \rvx = \vx ; \bm{\phi}_{l}, \bm{\theta}_l)$ at the $l$-th exit layer, denoted in short as $\hat{p}_l(\rvy | \vx)$, permits to retrieve both a predicted class label $\hat{\vy}_l = \arg\max_{\vy \in \mathcal{Y}} \hat{p}_l(\vy \,|\, \vx)$ and an associated confidence score  $\hat{\vc}_l \in [0,1]$, which aims to capture model's certainty about the exit's current prediction.  One common choice for classification tasks is the maximum class probability $\hat{\vc}_l = \max_{\vy \in \mathcal{Y}} \hat{p}_l(\vy \,|\, \vx)$.  However, different notions of confidence are possible, as we explore in \autoref{subsec:exp-segment}.  At test-time, these confidence scores can be leveraged to determine the early-exit model's required computations for new samples via thresholding. For a given test input $\vx$, the EENN exits\footnote{Such model-driven exiting is distinct from \emph{anytime} settings, where exits are environment-driven \citep{zilberstein1996using}.} at the first layer for which its confidence exceeds a pre-specified threshold value $\lambda_l \in [0,1]$.  For simplicity, a single threshold value $\lambda_l = \lambda, \,\forall \,l \in \{1,\ldots,L-1\}$ is often fixed across exit layers, a setup we also consider here.  The predictive distribution obtained from the model's early-exit mechanism is then given by
\begin{align}
\label{eq:eenn}
    \hat{p}_{\lambda}(\rvy | \vx) := 
    \hat{p}_{e}(\rvy | \vx), \text{ where } e = \begin{cases}
        \min E & \text{ if } E \neq \emptyset \\
        L & \text{ if } E = \emptyset
    \end{cases}, \; E := \{ l \in \{1,...,L-1\}: \hat{\vc}_l \geq \lambda  \}  .
\end{align}
The threshold parameter $\lambda$ regulates the trade-off between the EENN's accuracy and efficiency gains.  Lower values equate larger speed-ups by increasing the likelihood of an early exit (and \emph{vice versa}), at the cost of generally inferior predictions.  Such \emph{marginally monotone} behavior, where model performance improves \emph{on average} across exits, is a core assumption for the practical use of EENNs (see also \autoref{fig:risk_motiv}).  We formalize it as
\begin{align}
\label{eq:marg_mono}
    \E_{(\vx, \vy) \sim \p}[\ell (\hat{p}_{l}(\rvy | \vx), \vy)] \ge  \E_{(\vx, \vy) \sim \p}[ (\ell (\hat{p}_{l+1}(\rvy | \vx), \vy)] \quad \forall l=1,\ldots, L-1
\end{align}
for some arbitrary loss function $\ell$,
and elaborate on its connection to risk control in \autoref{subsec:methods-rcp}.  
It is common to determine early-exiting criteria by investigating these trade-offs between performance and efficiency on hold-out data, selecting thresholds that ensure the EENN meets a user's computational budget \citep{huang2017multi, teerapittayanon2016branchynet} or performance goals \citep{chen2020learning, elbayad2019depth, liu2021anytime}.  A standard practice is to treat the EENN's predictive confidence (\eg, its softmax scores) as a heuristic for prediction quality.  However, this is fallible, as EENNs can exhibit fluctuating or poorly-calibrated confidences \citep{kaya2019shallow, jazbec2024towards, meronen2024fixing}, motivating more principled threshold selection. 


\paragraph{Risk Control.} 

Statistical frameworks for risk control (RC) \cite{angelopoulos2021learn, angelopoulos2022conformal, bates2021distribution} aim to improve prediction reliability by equipping threshold-based models with safety assurances. Specifically, consider a pre-trained prediction model $\hat{f}_{\lambda}$ whose outputs depend on a threshold $\lambda$. For example, given a classification task, the set predictor $\hat{f}_{\lambda}: \X \rightarrow 2^{\Y}$ described by \citet{bates2021distribution} includes a class label in the set if its probability exceeds the threshold, \ie, $\hat{f}_{\lambda}(\vx) := \{ \vy \in \Y : \hat{p}(\vy | \vx) \ge \lambda\}$. Next, a notion of error for $\hat{f}_{\lambda}$ is captured by defining a problem-specific loss function $\ell: \Y \times \Y \rightarrow \mathbb{R}$. For instance, a meaningful choice for the set predictor could be the miscoverage loss $\ell (\hat{f}_{\lambda}(\vx), \vy) = \mathbbm{1}[ \vy \notin \hat{f}_{\lambda}(\vx)]$, where $\mathbbm{1}[\cdot]$ is the indicator function. The risk associated with a particular threshold $\lambda \in \Lambda$ is then defined as the expected loss
\begin{equation}
\label{eq:risk}
    \R(\lambda) := \E_{(\vx, \vy) \sim \p}\big[\ell (\hat{f}_{\lambda}(\vx), \vy) \big],
\end{equation}
with $\Lambda$ the set of potential threshold candidates. RC frameworks leverage different probabilistic tools -- which we detail further in \autoref{subsec:methods-rcp} -- to determine a subset $\hat{\Lambda} \subseteq \Lambda$ for which the risk in \autoref{eq:risk} is guaranteed to be small. Note that $\hat{\Lambda}$ is retrieved in a \emph{post-hoc} manner by leveraging the calibration set $\D_{cal}$ sampled \emph{i.i.d.} from $\p$. Thus, $\R(\hat{\lambda})$ is a random quantity dependent on $\D_{cal}$ for any $\hat{\lambda} \in \hat{\Lambda}$.

Given such a risk, desired safety assurances may vary in strength. For a tolerated risk level $\epsilon \in (0, 1)$, risk control \emph{in expectation} seeks to guarantee that
\begin{equation}
\label{eq:rcp-expect}
    \E_{\D_{cal} \sim \p^n}\big[ \R(\hat{\lambda}) \big] \le \epsilon \quad \forall \hat{\lambda} \in \hat{\Lambda},
\end{equation}
where the outer expectation is taken over randomly drawn calibration data of finite size $|\D_{cal}| = n$. A stronger statement on risk control \emph{with high probability} requires additionally specifying a probability level $\delta \in (0, 1)$, and aims to ensure that
\begin{equation}
\label{eq:rcp-highprob}
    \pr_{\D_{cal} \sim \p^n}(\R(\hat{\lambda}) \le \epsilon) \ge 1 - \delta \quad \forall \hat{\lambda} \in \hat{\Lambda}.
\end{equation}
That is, rather than the average control over calibration data in \autoref{eq:rcp-expect}, risk control according to \autoref{eq:rcp-highprob} holds with high probability for any particular sampled set $\D_{cal}$. In both cases, we may refer to $\hat{f}_{\hat{\lambda}}$ for any $\hat{\lambda} \in \hat{\Lambda}$ as a \emph{risk-controlling predictor}.  The risk level $\epsilon$ and probability level $\delta$ are user-specified parameters dictating how tightly the risk is controlled, and a particular choice has to consider the problem-specific setting and loss $\ell$.  For example, a reasonable choice for the stated miscoverage loss may be $(\epsilon, \delta) = (0.05, 0.1)$.  Observing $\hat{\Lambda} = \emptyset$ implies that there is no risk-controlling predictor for the selected $(\epsilon, \delta)$, indicating overly stringent risk control conditions which $\hat{f}_{\lambda}$ cannot satisfy. 
 
The prediction guarantees obtained via RC are highly practical, since they are \emph{(i)} distribution-free, \ie, they do not impose any particular assumptions on the generating distribution $\p$, \emph{(ii)} are \emph{post-hoc} applicable to any arbitrary choice of underlying predictor $\hat{f}_{\lambda}$, and \emph{(iii)} hold in finite samples, thus not relying on asymptotic limit statements. Indeed, we experimentally find that the provided assurances hold even for remarkably small calibration sets ($n \approx 100$, see \autoref{sec:exp}).

\section{Safe Early-Exiting via Risk Control}
\label{sec:methods}

We now detail our approach for early-exiting with safety guarantees based on risk control.  We begin by outlining EENNs as risk-controlling predictors (\autoref{subsec:methods-eenn}).  Next, we describe two general types of risk to measure performance drops resulting from early-exiting.
Importantly, these risks can be employed to assess the quality of both predictions and predictive distributions (\autoref{subsec:methods-risks}).  Finally, we motivate and formalise how different risk control frameworks can be adapted to the early-exit setting (\autoref{subsec:methods-rcp}).

\subsection{EENNs as Risk-Controlling Predictors}
\label{subsec:methods-eenn}


As mentioned in \autoref{sec:background}, risk control requires a predictor $\hat{f}_{\lambda}$ whose outputs depend on a threshold $\lambda \in \Lambda$.  The EENN's confidence-based thresholding behaviour (following \autoref{eq:eenn}) lends itself naturally to such a formulation.  For a particular exit threshold $\lambda \in [0, 1]$, the EENN $\hat{p}_{\lambda}(\rvy | \vx)$ will act as such a threshold predictor. To ensure that the EENN satisfies the user's control requirements, the risk-controlling threshold set $\hat{\Lambda}$ needs to be identified. 
Importantly, this can be done \emph{post-hoc} using a pre-trained EENN with fixed weights, since only the exit threshold $\lambda$ is modified.
In order to maximize computational savings while ensuring that the user-defined risk is managed, we select $\hat{\lambda} := \min \hat{\Lambda}$, since a low threshold encourages earlier exiting.  If $\hat{\Lambda} = \emptyset$ is empty, we default to $\hat{\lambda} = 1$, the equivalent of relying strictly on the full model output $\hat{p}_{L}(\rvy | \vx)$. 

\subsection{Early-Exiting Risks}
\label{subsec:methods-risks}

We next detail two types of risk which can be employed to guard against performance drops due to early-exiting.  Similarly to \citet{schuster2022confident}, our risks are defined in terms of relative exit performance, permitting their calculation for both labelled and unlabelled calibration data.  Moving beyond their setting, we suggest these risks for controlling the quality of both model \emph{predictions} and \emph{predictive distributions}, from which confidence scores can be derived.

\paragraph{Performance Gap Risk.} 
When calibration labels are present, these can be used to measure the early-exit performance through supervised losses. Let $\hat{\vo}_l(\vx)$ denote a general EENN output for some input $\vx$.  It takes the form $\hat{\vy}_l$ for predictions and $\hat{p}_l(\rvy | \vx)$ for the underlying predictive distribution.  The supervised \emph{performance gap risk} is then defined as
\begin{equation}
\label{eq:risk-gap}
    \R^G (\lambda) := \mathbb{E}_{(\vx, \vy) \sim \p} \big[ \ell \big(\hat{\vo}_{\lambda}(\vx), \vy \big) - \ell \big(\hat{\vo}_L(\vx), \vy \big) \big],
\end{equation} 
where $\hat{\vo}_{\lambda}(\vx)$ and $\hat{\vo}_{L}(\vx)$ refer to early-exit and full model outputs, respectively.  The choice of loss function $\ell$ is task-specific, and we outline relevant choices in \autoref{sec:exp}, such as the $0\text{-}1$ loss for image classification.  For predictive distribution control, we suggest leveraging a squared distributional loss which, when averaged across samples, recovers the Brier score \citep{brier1950verification}. 
Specifically, we define such a `Brier loss' for classification tasks as 
\begin{equation}
\label{eq:brier}
    \ell_{B}(\hat{p}_l(\rvy | \vx), \vy) := 
    \sum_{k=1}^K \big(\hat{p}_l(k | \vx) - \mathbbm{1}[\vy = k]\big)^2,
\end{equation}
where $\hat{p}_l(k | \vx)$ denotes the predicted probability of a particular class $k$, and $\mathbbm{1}[\vy = k]$ its one-hot encoded label.  The Brier score is a \emph{strictly proper scoring rule} \citep{gneiting2007strictly, gneiting2007probabilistic}, ensuring its suitability to assess probabilistic forecasts.  Moreover, its mathematical formulation lends itself favorably to risk control when compared to other widely used probabilistic metrics. We defer further details to \autoref{app:math-brier}. 
Addressing risk control of the underlying predictive distribution $\hat{p}_l(\rvy | \vx)$ is a compelling extension, as confidence or uncertainty estimates are typically derived from it.  Particularly in safety-critical scenarios where reliable uncertainties are essential \citep{heal2014reflections, begoli2019need}, such control can thus prove very useful.

\paragraph{Consistency Risk.} In the case of unlabelled calibration data, an unsupervised version of \autoref{eq:risk-gap} can be obtained by replacing the ground truth labels $\vy$ with labels $\hat{\vy}_L$ obtained from the full model. We define the unsupervised \emph{consistency risk} as 
\begin{equation}
\label{eq:risk-consist}
    \R^C (\lambda) := \\ \mathbb{E}_{(\vx, \cdot) \sim \p} \big[ \ell \big(\hat{\vo}_{\lambda}(\vx), \hat{\vy}_L \big) - \ell \big(\hat{\vo}_L(\vx), \hat{\vy}_L \big) \big],
\end{equation} 
where only input samples $\vx \sim \p$ are required for its evaluation.  

\begin{wrapfigure}{t}{0.5\textwidth}
  \centering
   \vspace{-1.8\baselineskip}
  \includegraphics[width=0.47\textwidth]{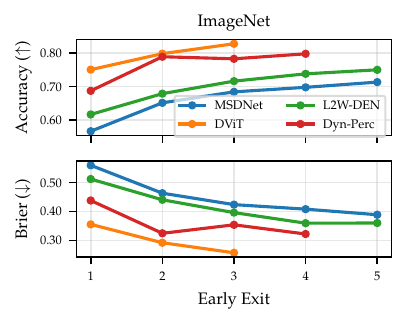}
  \vspace{-0.7\baselineskip}
  \caption{Accuracy and Brier score \citep{brier1950verification} across exits for different EENNs for image classification on ImageNet (\autoref{subsec:exp-classif}). \emph{Marginally monotone} performance trends (\autoref{eq:marg_mono}) are generally observed across models, with last-layer exits performing best.}
  \label{fig:risk_motiv}
  \vspace{-3\baselineskip}
\end{wrapfigure}

For regular prediction losses, \autoref{eq:risk-consist} collapses to evaluating the per-sample loss $\ell (\hat{\vy}_{\lambda}, \hat{\vy}_L)$, since $\ell (\hat{\vy}_L, \hat{\vy}_L) = 0$.  For predictive distribution control, the loss difference remains, and we substitute $\vy$ in \autoref{eq:brier} by sampling a label $\hat{\vy}_L \sim \hat{p}_L(\rvy | \vx)$ from the EENN's final layer.  
Our reliance on the last layer's output is motivated by the EENN's marginal monotonicity (\autoref{eq:marg_mono} and \autoref{fig:risk_motiv}).  Finally, we note that both the performance gap risk $\mathcal{R}^G (\lambda)$ and consistency risk $\mathcal{R}^C (\lambda)$ are quite agnostic to the EENN's actual predictive performance.  In both cases, the risk formulation aims to ensure prediction consistency via the \emph{relative} performance gap between exits, as opposed to \emph{absolute} performance with respect to observed ground truth labels.

\subsection{Risk Control Frameworks}
\label{subsec:methods-rcp}

After having defined our early-exit risks, we next outline how a desired risk-controlling exit threshold $\hat{\lambda}$ can be computed based on calibration data.  We begin by considering a `naive' empirical approach, followed by risk control \emph{in expectation} (\autoref{prop:crc}) and \emph{with high probability} (\autoref{prop:ucb}).

\paragraph{Empirical Approach.}  For a tolerated risk level $\epsilon \in (0,1)$, a `naive' empirical threshold can be selected by picking the smallest threshold $\lambda \in [0, 1]$ from the candidate set $\Lambda$ for which the risk on the calibration set $\D_{cal}$ is controlled, \ie, 
\begin{equation}
\label{eq:lam-emp}
    \hat{\lambda}_{\text{emp}} := \min \{\lambda \in \Lambda: \hat{\R} (\lambda ; \D_{cal}) \le \epsilon \}.
\end{equation}
Note that $\hat{\R} (\lambda ; \D_{cal}) = \frac{1}{n} \sum_{i=1}^n \ell (\hat{\bm{o}}_{\lambda}(\vx_i), \vy_i)$ is the \emph{empirical} calibration risk, an approximation of the true risk in \autoref{eq:risk} computed on $\D_{cal}$ (and likewise $\hat{\R} (\lambda ; \D_{test})$ denotes the empirical test risk).  For the risks introduced in \autoref{subsec:methods-risks} the threshold $\hat{\lambda}_{\text{emp}}$ is always well-defined, since $\hat{\R}(\lambda=1)$ is zero.

\paragraph{Risk Control in Expectation.} The threshold $\hat{\lambda}_{\text{emp}}$ is a straight-forward choice, but can fail to control the risk on test data if the approximation quality of $\R(\lambda)$ by $\hat{\R} (\lambda ; \D_{cal})$ is poor, \eg, due to badly drawn calibration data.  Perhaps surprisingly, only a slight modification of \autoref{eq:lam-emp} is required to ensure risk control \emph{in expectation}.  Specifically, for a bounded loss function $\ell \le B$ where $B > 0$, and assuming a monotone risk $\R(\lambda)$, the threshold\footnote{In \autoref{eq:lam-crc} and \autoref{eq:lam-ucb}, we default to $\hat{\lambda} = 1$ if $\hat{\Lambda} = \emptyset$, implying that risk control does not permit early-exiting.}
\begin{equation}
\label{eq:lam-crc}
    \hat{\lambda}_{\text{CRC}} := \min \left\{\lambda \in \Lambda: \frac{n}{n+1}\hat{\cal{R}} (\lambda ; \D_{cal}) + \frac{B}{n+1} \le \epsilon \right\}
\end{equation}
guarantees \autoref{eq:rcp-expect}, thus shielding against bad sample draws on average.  It is easy to show that $\hat{\lambda}_{\text{emp}} = \lim_{n \rightarrow \infty} \hat{\lambda}_{\text{CRC}}$, and since our losses are designed to be upper-bounded by $B \in \{1, 2\}$, the two thresholds already coincide for small calibration sets ($n \approx 100$).  We formalize risk control in expectation in the following proposition for our early-exit setting:
\begin{proposition}
\label{prop:crc}
    Let $\ell: \Lambda \rightarrow (-\infty, B]$ be a right-continuous bounded loss, and assume a marginally monotone EENN (\autoref{eq:marg_mono}).  Then the exit threshold $\hat{\lambda}_{\text{CRC}}$ ensures risk control in expectation, \ie, it holds that $\;\; \E_{\D_{cal} \sim \p^n}\big[ \R(\hat{\lambda}_{\text{CRC}}) \big] \le \epsilon \;\;$ for any $\epsilon \in (0,1)$.
\end{proposition}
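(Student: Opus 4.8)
The plan is to recognize \autoref{prop:crc} as an instance of the conformal risk control (CRC) guarantee of \citet{angelopoulos2022conformal}, specialized to the early-exit predictor, and to reduce the stated bound to a statement about $n+1$ exchangeable samples. Writing $L_i(\lambda) := \ell(\hat{\vo}_\lambda(\vx_i), \vy_i)$ for the per-sample loss at calibration point $i \in \{1,\dots,n\}$ and $L_{n+1}(\lambda)$ for an independent test point, I would first observe that $\hat{\lambda}_{\text{CRC}}$ is a function of $\D_{cal}$ alone and is therefore independent of the test point. Since $\R(\lambda) = \E_{(\vx,\vy)\sim\p}[L_{n+1}(\lambda)]$, this independence yields $\E_{\D_{cal}\sim\p^n}[\R(\hat{\lambda}_{\text{CRC}})] = \E[L_{n+1}(\hat{\lambda}_{\text{CRC}})]$, where the expectation now ranges over all $n+1$ i.i.d. draws. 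The goal thus becomes bounding the expected test loss evaluated at the data-dependent threshold.

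For the core argument I would introduce the augmented empirical risk $R_{n+1}(\lambda) := \tfrac{1}{n+1}\sum_{i=1}^{n+1} L_i(\lambda)$ over calibration and test points jointly, together with the idealized threshold $\lambda^\ast := \min\{\lambda \in \Lambda : R_{n+1}(\lambda) \le \epsilon\}$. The steps are: (i) using only the upper bound $L_{n+1}(\lambda) \le B$, establish the purely algebraic inequality $\tfrac{n}{n+1}\hat{\R}(\lambda;\D_{cal}) + \tfrac{B}{n+1} \ge R_{n+1}(\lambda)$, so that any $\lambda$ meeting the defining condition of $\hat{\lambda}_{\text{CRC}}$ in \autoref{eq:lam-crc} also satisfies $R_{n+1}(\lambda) \le \epsilon$; (ii) deduce from this set inclusion, via the elementary fact that the infimum of a subset dominates that of its superset, that $\hat{\lambda}_{\text{CRC}} \ge \lambda^\ast$; (iii) invoke exchangeability of $L_1,\dots,L_{n+1}$ — crucially, $\lambda^\ast$ is a \emph{symmetric} function of all $n+1$ points — to obtain $\E[L_{n+1}(\lambda^\ast)] = \tfrac{1}{n+1}\sum_{i=1}^{n+1}\E[L_i(\lambda^\ast)] = \E[R_{n+1}(\lambda^\ast)] \le \epsilon$, where right-continuity of $\ell$ in $\lambda$ ensures the constraint is met at the infimum itself; and (iv) combine $\hat{\lambda}_{\text{CRC}} \ge \lambda^\ast$ with monotonicity of $L_{n+1}(\cdot)$ to conclude $L_{n+1}(\hat{\lambda}_{\text{CRC}}) \le L_{n+1}(\lambda^\ast)$, and hence $\E[L_{n+1}(\hat{\lambda}_{\text{CRC}})] \le \epsilon$. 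Well-definedness of both thresholds is guaranteed because $\hat{\R}(\lambda=1)=0$, making the candidate sets nonempty.

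The hard part, and the only place any monotonicity is genuinely used, is step (iv): the exchangeability symmetrization in (iii) and the set-inclusion comparison in (ii) both go through using only boundedness and right-continuity. CRC literally requires each per-sample map $\lambda \mapsto L_i(\lambda)$ to be non-increasing, whereas the EENN supplies only \emph{marginal} monotonicity (\autoref{eq:marg_mono}), i.e. monotonicity of the \emph{expected} loss across exits. For a fixed sample, increasing $\lambda$ merely pushes its confidence-based exit to a weakly later layer, and since per-sample performance need not be ordered across layers, $L_i(\lambda)$ may oscillate; the pointwise inequality $L_{n+1}(\hat{\lambda}_{\text{CRC}}) \le L_{n+1}(\lambda^\ast)$ then need not hold. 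Bridging this gap is the crux: I would aim to argue that, for the early-exit risks of interest, the loss is non-increasing in $\lambda$ at the per-realization level (so that marginal monotonicity can be upgraded to the pointwise statement (iv) demands), or otherwise to isolate exactly which structural property of $\hat{\vo}_\lambda$ makes the full-model comparison in $\R^G$ and $\R^C$ behave monotonically. Once step (iv) is secured, the CRC machinery applies verbatim and the bound $\E_{\D_{cal}}[\R(\hat{\lambda}_{\text{CRC}})]\le\epsilon$ follows for every $\epsilon\in(0,1)$.
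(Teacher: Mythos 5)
Your steps (i)--(iii) reproduce the paper's argument essentially verbatim: the augmented empirical risk over the $n{+}1$ points, the idealized threshold $\lambda^\ast$ (the paper's $\hat{\lambda}_{n+1}$), the algebraic bound using $\ell \le B$ to show $\hat{\lambda}_{n+1} \le \hat{\lambda}_{\text{CRC}}$, and the exchangeability/uniformity argument giving $\E[\ell_{n+1}(\hat{\lambda}_{n+1})] \le \epsilon$. The genuine gap is your step (iv), which you correctly flag but leave open; moreover, the escape route you sketch --- upgrading marginal monotonicity to a per-realization statement so that $\ell_{n+1}(\hat{\lambda}_{\text{CRC}}) \le \ell_{n+1}(\lambda^\ast)$ holds pointwise --- heads in the wrong direction. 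That pointwise property is exactly the \emph{conditional} monotonicity the proposition is designed to avoid assuming, and the paper cites it as empirically violated (overthinking), so no argument along those lines can succeed in general.

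The paper's fix is to never ask for the pointwise inequality. Since $\R(\cdot)$ is a deterministic function of $\lambda$, \autoref{lemma:marg-mono} converts marginal monotonicity of the EENN (\autoref{eq:marg_mono}) into monotone decrease of the map $\lambda \mapsto \R(\lambda)$. Combined with the ordering $\hat{\lambda}_{n+1} \le \hat{\lambda}_{\text{CRC}}$, which holds for every realization of the data, this yields $\R(\hat{\lambda}_{\text{CRC}}) \le \R(\hat{\lambda}_{n+1})$ surely, hence $\E[\R(\hat{\lambda}_{\text{CRC}})] \le \E[\R(\hat{\lambda}_{n+1})]$; the right-hand side is then identified with $\E[\ell_{n+1}(\hat{\lambda}_{n+1})]$ and bounded by $\epsilon$ using precisely your step (iii) (law of total expectation, conditional uniformity over the $n{+}1$ losses, and right-continuity). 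In short, the comparison between the two thresholds is carried out at the level of population risks in expectation rather than per-sample losses pointwise; this reordering is the paper's advertised relaxation from ``monotone losses'' to ``monotone risks'' and is exactly why the hypothesis of \autoref{prop:crc} is marginal rather than conditional monotonicity. With that substitution for your step (iv), your outline becomes the paper's proof.
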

Our proposition is an extension of \emph{Conformal Risk Control} \citep{angelopoulos2022conformal} (CRC) to the early-exit setting, and a proof can be found in \autoref{app:math-rcp}.  Our main technical insight is that risk control can be relaxed to assume monotone \emph{risks}, rather than monotone \emph{losses} as in the original formulation \cite{angelopoulos2022conformal}.  This relaxation is crucial for the early-exit setting, since we can relate monotone risks to assumptions of \emph{marginal} monotonicity on the EENN (see \autoref{lemma:marg-mono}).  In contrast, monotone losses translate to assuming \emph{conditional} monotonicity, a much stronger requirement suggesting the EENN's performance improves across exits \emph{per sample}, and which has been shown to be violated in practice \citep{kaya2019shallow, wolczyk2021zero, jazbec2024towards}.

\paragraph{Risk control with High Probability.} A stronger guarantee can be obtained by ensuring risk control \emph{with high probability} for any drawn calibration set.  We employ the \emph{Upper Confidence Bound} (UCB) from \citet{bates2021distribution} for this purpose.  First, an empirical upper bound $\hat{\R}^{+}(\lambda; \D_{cal})$ is derived to bound the risk $\R(\lambda)$ with high probability.  That is, for a probability level $\delta \in (0,1)$ it holds that
\begin{equation}
\label{eq:ucb}
    \pr_{\D_{cal} \sim \p^n} \big(\mathcal{R}(\lambda) \le \hat{\R}^{+}(\lambda ; \D_{cal})\big) \ge 1 - \delta \quad \forall \lambda \in \Lambda.
\end{equation}
An exit threshold ensuring risk control according to \autoref{eq:rcp-highprob} is then selected as
\begin{equation} 
\label{eq:lam-ucb} 
    \hat{\lambda}_{\text{UCB}} := \min \{\lambda \in \Lambda: \hat{\R}^{+}(\lambda' ; \D_{cal}) < \epsilon, \forall \lambda' \ge \lambda \}.
 \end{equation}
Similarly to \autoref{prop:crc}, we can now formalize risk control with high probability for the early-exit setting:
\begin{proposition}
\label{prop:ucb}
    Let $\ell:\Lambda \rightarrow  [-B, B]$ be a bounded loss, and assume a marginally monotone EENN (\autoref{eq:marg_mono}).  Then the exit threshold $\hat{\lambda}_{\text{UCB}}$ ensures risk control with high probability, \ie, it holds that $\;\; \pr_{\D_{cal} \sim \p^n}(\R(\hat{\lambda}_{\text{UCB}}) \le \epsilon) \ge 1 - \delta \;\;$ for any $(\epsilon, \delta) \in (0,1)^2$.
\end{proposition}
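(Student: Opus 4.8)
The plan is to mirror the Risk-Controlling Prediction Set argument of \citet{bates2021distribution}, adapting it to the early-exit threshold $\lambda$ while weakening the required monotonicity from losses to \emph{risks}. The central difficulty specific to the high-probability setting is that $\hat{\lambda}_{\text{UCB}}$ is itself a random quantity depending on $\D_{cal}$, so the pointwise guarantee in \autoref{eq:ucb} cannot be applied directly at $\hat{\lambda}_{\text{UCB}}$. I would therefore reduce the random threshold to a single \emph{deterministic} threshold at which the pointwise bound is valid, and then control the failure event through an event inclusion.

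First I would establish that the true risk $\R(\lambda)$ is non-increasing in $\lambda$. This is the heart of the argument and the step I expect to be the main obstacle: marginal monotonicity (\autoref{eq:marg_mono}) is an assumption on the per-layer, population-averaged losses, whereas the machinery needs monotonicity of $\R$ as a function of the continuous threshold. I would invoke \autoref{lemma:marg-mono} to bridge this gap, arguing that raising $\lambda$ shrinks the qualifying exit set $E$ in \autoref{eq:eenn} and hence pushes every sample's exit layer weakly later, so that marginal monotonicity yields a non-increasing expected early-exit loss and thus a non-increasing gap/consistency risk. I would emphasise that only the expectation must be monotone; per-sample (conditional) monotonicity, which is known to fail for EENNs, is never needed, and this is precisely the relaxation distinguishing our statement from the original formulation.

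Given monotone $\R$, I would define the deterministic crossing threshold $\lambda^{*} := \max\{\lambda \in \Lambda : \R(\lambda) > \epsilon\}$, treating the trivial case $\{\lambda \in \Lambda : \R(\lambda) > \epsilon\} = \emptyset$ separately (there control holds with probability one). I would then prove the event inclusion $\{\R(\hat{\lambda}_{\text{UCB}}) > \epsilon\} \subseteq \{\R(\lambda^{*}) > \hat{\R}^{+}(\lambda^{*})\}$: on the failure event, monotonicity of $\R$ forces $\hat{\lambda}_{\text{UCB}} \le \lambda^{*}$, whence the defining condition of $\hat{\lambda}_{\text{UCB}}$ in \autoref{eq:lam-ucb} (which requires $\hat{\R}^{+}(\lambda') < \epsilon$ for all $\lambda' \ge \hat{\lambda}_{\text{UCB}}$) gives $\hat{\R}^{+}(\lambda^{*}) < \epsilon$, while $\R(\lambda^{*}) > \epsilon$ holds by construction of $\lambda^{*}$; combining the two yields $\R(\lambda^{*}) > \hat{\R}^{+}(\lambda^{*})$. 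Applying the pointwise UCB property \autoref{eq:ucb} at the fixed point $\lambda^{*}$ bounds the probability of the right-hand event by $\delta$, giving $\pr_{\D_{cal} \sim \p^n}(\R(\hat{\lambda}_{\text{UCB}}) \le \epsilon) \ge 1 - \delta$.

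It remains to justify \autoref{eq:ucb} itself, where the two-sided boundedness $\ell : \Lambda \rightarrow [-B, B]$ enters. I would note that shifting and rescaling the loss to $[0,1]$ lets a standard concentration-based upper bound (e.g.\ the Hoeffding–Bentkus construction of \citet{bates2021distribution}) furnish a valid pointwise $\hat{\R}^{+}$, with the shift merely translating $\R$ and its bound by a common constant; this is why, in contrast to the in-expectation case, a two-sided bound rather than a one-sided one is assumed. For continuous $\Lambda$ I would additionally invoke right-continuity of $\R$ so that the maximum defining $\lambda^{*}$ (or its limiting analogue) and the event inclusion remain valid, with the finite-grid case being immediate. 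I expect the monotonicity step via \autoref{lemma:marg-mono} to be the genuinely delicate part, since the continuous-threshold rearrangement of exit layers must be controlled using only the marginal assumption, whereas the remaining reduction is the standard high-probability RCPS argument.
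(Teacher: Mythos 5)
Your proposal is correct and follows the same skeleton as the paper's proof: \autoref{lemma:marg-mono} converts marginal monotonicity of the EENN into a non-increasing risk $\R(\lambda)$, and the random threshold $\hat{\lambda}_{\text{UCB}}$ is then reduced to the deterministic crossing point $\lambda^{*}$ via the event inclusion $\{\R(\hat{\lambda}_{\text{UCB}}) > \epsilon\} \subseteq \{\R(\lambda^{*}) > \hat{\R}^{+}(\lambda^{*})\}$, which is exactly the argument of Thm.~A.1 in \citet{bates2021distribution} that the paper cites rather than re-derives (including the dispensability of risk continuity, per their Remark~3). The one place you genuinely diverge is the validity of the pointwise bound \autoref{eq:ucb} for losses in $[-B,B]$: you propose an affine shift-and-rescale to $[0,1]$ followed by any off-the-shelf bound such as Hoeffding--Bentkus, whereas the paper directly modifies the WSR betting martingale (\autoref{prop:wsr-ext}), clipping the bet size at $\nu_i \le 1/2B$ so that the capital process $\kappa_i$ remains a non-negative martingale. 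Your route is sound and, when instantiated with WSR, is arithmetically identical to the paper's $1/2B$ construction; what the paper's direct treatment additionally buys is the subsequent remark that under marginal monotonicity the clip can be relaxed to $1/B$ (since then $\ell(\lambda,L)-\R(\lambda,L) \ge -2B$ but the risk itself is non-negative), a tightening that a plain affine rescale does not expose, and it keeps the argument aligned with the paper's empirical point that Hoeffding--Bentkus-based pipelines handle negative (overthinking) losses poorly. Your stated worry that the monotonicity step is the delicate one is slightly misplaced: given \autoref{eq:eenn}, the exit-set shrinkage argument is the short part (it is the entirety of \autoref{lemma:marg-mono}), and the real bespoke content of the paper's proof lives in the martingale modification.
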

This reformulation of the main theorem from \citet{bates2021distribution} (Thm. A.1) is proven in \autoref{app:math-rcp}, and an algorithmic description is given in \autoref{app:algos}.  We employ their suggested Waudby-Smith-Ramdas bound \citep{waudby2024estimating} (WSR) to compute $\hat{\R}^{+}(\lambda; \D_{cal})$, but relax the bounding requirements on the loss from $\ell \in [0,1]$ to $\ell \in [-B, B]$ for $B > 0$.  
This change has important implications for the early-exit setting, since it admits `rewarding' the EENN when an earlier exit performs better than the final exit for some samples, a phenomenon known as \emph{overthinking} \citep{kaya2019shallow, jazbec2024towards}.  In practice, this results in a better risk estimate and less conservative early-exiting.  See \autoref{app:math-bounded-loss} for more details on the effect of loss bounds.

\begin{wrapfigure}{R}{0.5\textwidth}
  \centering
  \vspace{-1\baselineskip}
  \includegraphics[width=0.48\textwidth]{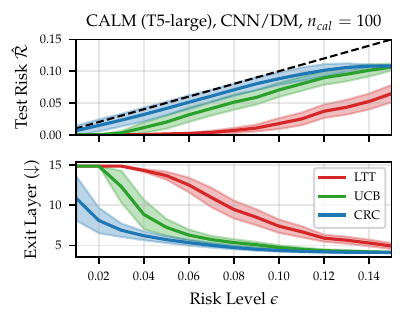}
  \caption{Empirical test risk (\emph{top}) and efficiency gains (\emph{bottom}) for the CALM model \cite{schuster2022confident} for text summarization on CNN/DM. Our adaptation of UCB \cite{bates2021distribution} (\autoref{prop:ucb}) outperforms the LTT \citep{angelopoulos2021learn} approach in CALM by yielding larger efficiency gains under the same risk control assurances (see \autoref{sec:exp-calm} for details).  Shading denotes the standard deviation across $S=100$ calibration/test splits.}
  \label{fig:calm}
  \vspace{-0.5\baselineskip}
\end{wrapfigure}

\paragraph{Learn-then-Test and CALM \citep{schuster2022confident}.}  \emph{Learn-then-Test} \citep{angelopoulos2021learn} (LTT) is another framework for high-probability risk control, where threshold selection is framed as a multiple hypothesis testing problem.  In contrast to UCB (\autoref{prop:ucb}), LTT does not require risk monotonicity, and can thus also be employed when the EENN is suspected to violate marginally monotone behaviour.  LTT in the early-exit setting has been employed by \citet{schuster2022confident} (CALM), presumably motivated by the avoidance of this assumption. 
However, expecting an EENN to marginally improve across exits is a core requirement which implicitly underlies any practical implementation.  Since this assumption is usually also empirically satisfied (\autoref{fig:risk_motiv}), there is no obvious reason to explicitly avoid it.  Furthermore, correcting for multiple testing in LTT via \emph{fixed sequence testing} -- as is done for CALM -- will only yield practical savings if monotonicity is satisfied. 
We stress these observations since we find that UCB provides greater computational savings than LTT under the same guarantees (\autoref{fig:calm} and \autoref{sec:exp-calm}), including for small-sample regimes ($n \approx 100$), which are of high practical interest and were not explored by \citet{schuster2022confident}. Moreover, due to LTT's reliance on the Hoeffding-Bentkus bound \cite{bentkus2004hoeffding}, it cannot account for instances of model overthinking (see \autoref{app:math-bounded-loss}). 
Thus, unless \autoref{eq:marg_mono} is known to be violated, we recommend UCB over LTT in the early-exit setting.

\section{Related Work}
\label{sec:related}

\emph{Early-Exiting} \cite{teerapittayanon2016branchynet, han2021dynamic} as a dynamic approach to accelerate model inference is both orthogonal and complementary to static model compression techniques such as pruning, quantization, and knowledge-distillation \cite{bai2024beyond, xu2024survey, sponner2024adapting, zhou2024survey, gholami2022survey}. 
Its wide-ranging applicability has been demonstrated across numerous vision \cite{huang2017multi, liu2021anytime, regol2023jointly, tang2023deediff, fei2022deecap} and language tasks \cite{elbayad2019depth, zhou2020bert, schuster2022confident, xu2022survey, bae2023fast, pan2024ee}. While most prior work has focused on the trade-off between performance quality and computational savings, the \emph{safety} of early-exit models has received less attention to date \cite{schuster2021consistent, schuster2022confident, meronen2024fixing, jazbec2023anytime}.  \emph{Risk Control} has gained traction due to an interest in efficient, \emph{post-hoc} approaches with safety assurances for large models.  Most related, \emph{conformal prediction} \citep{shafer2008tutorial} has been popularized as an effective method for uncertainty quantification with guarantees on the miscoverage risk \citep{fontana2023conformal, angelopoulos2023conformal}.  Recently, multiple proposals address the control of more general risk notions \citep{angelopoulos2022conformal, bates2021distribution, angelopoulos2021learn, snell2022quantile, park2019pac, laufer2022efficiently}, with explored applications ranging from imaging \citep{teneggi2023trust, angelopoulos2022image, xu2023conformal, feldman2023achieving, kutiel2022s, sankaranarayanan2022semantic, belhasin2023principal, timans2024adaptive} to language \citep{zollo2023prompt, deng2024distribution, laufer2022efficiently, quach2023conformal} and beyond \citep{jin2023sensitivity, laufer2023risk}.  Most closely related to our work is research by \citet{schuster2022confident}, who first employ risk control for safe early-exiting in language modeling.  We move beyond their setting by \emph{(i)} 
controlling the quality of both model predictions and uncertainty estimates (\autoref{subsec:methods-risks}), \emph{(ii)} obtaining better efficiency gains through careful selection of our risk control framework (\autoref{subsec:methods-rcp}), and \emph{(iii)} extending early-exit risk control to novel tasks (\autoref{sec:exp}).  \citet{ringel2024early}'s work is also related: they apply risk control to exit early for a time series prediction task.  Yet their emphasis is on exiting from a stream of input features, whereas we exit from the model itself (\ie, a stream of model layers).

\section{Experiments}
\label{sec:exp}

We empirically validate early-exiting via risk control on a suite of different tasks: image classification (\autoref{subsec:exp-classif}), semantic segmentation (\autoref{subsec:exp-segment}), language modelling (\autoref{sec:exp-calm}), image generation with diffusion (\autoref{sec:exp-diff}), and speculative decoding (\autoref{sec:exp-decode}).  Our code is publicly available at \url{https://github.com/metodj/RC-EENN}.  We begin by outlining our general risk control design and evaluation metrics.

\paragraph{Risk control design.} We target control of the performance gap and consistency risks defined in \autoref{subsec:methods-risks}.  For predictions $\hat{\vy}_l$ we employ target-specific losses, and, when applicable, for predictive distributions $\hat{p}_l(\rvy | \vx)$ our Brier score formulation.  We denote these four risks in short as $\mathcal{R}^{G}(\hat{\vy})$, $\mathcal{R}^{G}(\hat{p})$, $\mathcal{R}^{C}(\hat{\vy})$ and $\mathcal{R}^{C}(\hat{p})$.  Risk control requirements of different strength are assessed by varying the risk level $\epsilon$.  Note that our approach is entirely \emph{post-hoc}, and our experiments leverage existing pretrained EENNs when possible.  Thus the underlying models are typically not modified, and we omit the commonly seen performance \emph{vs.}~FLOP curves to instead appropriately benchmark against different risk levels.  For risk control with high probability, we set $\delta = 0.1$ (\ie, 90 \% probability).  Reported numbers are averaged across multiple trials of calibration and test splitting ($S=100$) to account for sampling effects.  Additional results across experiments can be found in \autoref{app:results}.

\paragraph{Evaluation metrics.} We evaluate our results based on obtained test risks and efficiency gains.  We assess whether the guarantees stated in \autoref{subsec:methods-rcp} are satisfied 
by checking if the \emph{empirical test risk} for a given risk-controlling threshold is controlled, \ie, $\hat{\R}(\hat{\lambda};\D_{test}) \le \epsilon$ holds.  Ideally, the measured test risk should also approach $\epsilon$ from below, as to prevent overly conservative early-exiting.  We measure efficiency gains by reporting the \emph{average exit layer} across test samples, or its relative improvement over last-layer exiting (in \%).  Similar gains in terms of arithmetic operations (FLOPS) are reported in \autoref{app:results}.  We favour approaches which, while controlling the test risk, exit as early as possible.

\subsection{Image Classification}
\label{subsec:exp-classif}

For image classification, we focus on the ImageNet dataset \cite{deng2009imagenet}. 
We employ four state-of-the-art EENNs to demonstrate that our findings generalize across different models and architectures: \textbf{MSDNet} \cite{huang2017multi}, \textbf{DViT} \cite{wang2021not}, \textbf{L2W-DEN} \cite{han2022learning}, and \textbf{Dyn-Perc} \cite{han2023dynamic} (see \autoref{app:implement-classif} for details).  We employ the standard $0$-$1$ loss 
for predictions, and the Brier loss formulation from \autoref{eq:brier} for predictive distributions. 
 \autoref{fig:imagenet_100} displays results 
for the small-sample calibration regime ($n = 100$).  In line with our theoretical guarantees, the test risk remains controlled across all models, risk types, and risk levels $\epsilon$ (\emph{top} row).  The steeply decreasing efficiency curves affirm that even under strict control requirements, substantial efficiency gains can be obtained (\emph{bottom} row).  For example, controlling the prediction gap risk at a strict $5 \%$ ($\R^{G}(\hat{\vy})$ for $\epsilon=0.05$) results in a model average of $\sim 61\%$ less layers evaluated for control in expectation (CRC, \autoref{prop:crc}), and $\sim 46\%$ for control with high probability (UCB, \autoref{prop:ucb}), see \autoref{tab:img_class}.  Naturally, UCB produces more cautious early-exiting due to its stronger safety assurance, but these differences decrease for larger calibration sets (see \autoref{app:img_cls} for $n=1000$).  This highlights the practical benefit of allocating more calibration samples: a larger sample size can aid to mitigate the price paid by a high-probability guarantee in terms of obtained inference speed-ups. 

\begin{figure}[t]
  \centering
  \vspace{-4mm}
  \includegraphics[width=\textwidth]{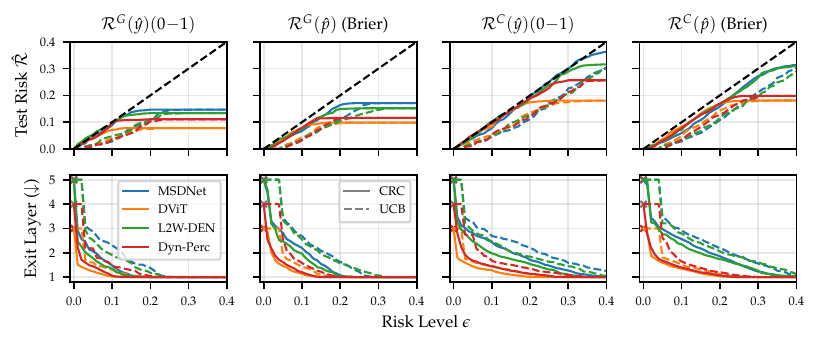}
  \caption{Empirical test risk (\emph{top}) and efficiency gains (\emph{bottom}) for different early-exit models, risks (\autoref{subsec:methods-risks}) and risk levels $\epsilon$ on ImageNet (for calibration set size $n = 100$). In line with theoretical results, the test risk is controlled across models, risk types, and levels. Despite guaranteeing control \emph{in expectation} (CRC, \autoref{prop:crc}) or \emph{with high probability} (UCB, \autoref{prop:ucb}), obtained gains are substantial.}
  \label{fig:imagenet_100}
  \vspace{-4mm}
\end{figure}

\subsection{Semantic Segmentation}
\label{subsec:exp-segment}


\begin{table}[t]
    \centering
    \caption{Efficiency gains for semantic segmentation with risk control via UCB (\autoref{prop:ucb}) on Cityscapes.  We evaluate for different risks (\autoref{subsec:methods-risks}), confidence measures (\autoref{subsec:exp-segment}) and risk levels $\epsilon$.  Displayed values denote relative improvement over last-layer exiting in terms of mean exit layer (in \%).}
    \vspace{2mm}
    \label{tab:img-segm-gains}
    \setlength{\tabcolsep}{7pt}
    \resizebox{1\textwidth}{!}{
    \begin{tabular}{ll|cccccc|cccccc}
    \toprule
    \multicolumn{2}{c}{\textbf{Risk}} & \multicolumn{3}{c}{$\mathcal{R}^G (\hat{\vy})$ (mIoU)} & \multicolumn{3}{c}{$\mathcal{R}^G (\hat{p})$ (Brier)} & \multicolumn{3}{c}{$\mathcal{R}^C (\hat{\vy})$ (Miscov.)} & \multicolumn{3}{c}{$\mathcal{R}^C(\hat{p})$ (Brier)} \\
    \cmidrule(lr){3-5} \cmidrule(lr){6-8} \cmidrule(lr){9-11} \cmidrule(lr){12-14}
    \multicolumn{2}{c}{Level $\bm{\epsilon}$} & \textbf{0.01} & \textbf{0.05} & \textbf{0.1} & \textbf{0.01} & \textbf{0.05} & \textbf{0.1} & \textbf{0.01} & \textbf{0.05} & \textbf{0.1} & \textbf{0.01} & \textbf{0.05} & \textbf{0.1} \\
    \midrule
    \multirow{3}{*}{\begin{sideways} Mean \end{sideways}} & \textbf{Top-1} & 6.3 & 33.7 & 53.5 & 0.0 & 13.6 & 43.4 & 6.3 & 39.2 & 61.8 & 0.0 & 39.3 & 58.4 \\
    & \textbf{Top-Diff} & 9.3 & 35.5 & 54.4 & 0.0 & 17.5 & 44.3 & 6.3 & 39.9 & 62.4 & 0.0 & 38.6 & 57.9 \\ 
    & \textbf{Entropy} & 5.2 & 36.0 & 54.3 & 0.0 & 17.9 & 41.0 & 5.1 & 40.4 & 61.3 & 0.0 & 40.1 & 58.3 \\
    \midrule
    \multirow{3}{*}{\begin{sideways} Patch \end{sideways}} & \textbf{Top-1} & 10.0 & 35.7 & 53.3 & 0.0 & 18.4 & 45.3 & 8.8 & 39.1 & 61.5 & 0.0 & 38.0 & 58.3 \\  
    & \textbf{Top-Diff} & 10.0 & 35.2 & 53.4 & 0.0 & 19.4 & 45.9 & 8.8 & 40.5 & 62.2 & 0.0 & 38.4 & 58.8 \\ 
    & \textbf{Entropy} & 9.1 & 34.8 & 53.5 & 0.0 & 18.0 & 45.8 & 8.1 & 38.9 & 61.5 & 0.0 & 37.3 & 57.1 \\  
    \bottomrule
    \end{tabular}}
    \vspace{-3mm}
\end{table}

For this task, we explore the effect of different confidence measures used in \autoref{eq:eenn} on realizable speed-ups. We use the EENN with four exits proposed by \citet{liu2021anytime} (ADP-C, see \autoref{app:implement-segm}).  ADP-C permits \emph{pixel-level} early-exiting with per-pixel confidence scores.  Since we desire to early-exit the entire image instead, we explore \emph{image-level} aggregations alongside different confidence scores, which are briefly outlined below.  As task-specific prediction losses, we consider the commonly used \emph{mean intersection-over-union} (mIoU) and \emph{miscoverage} for the labelled and unlabelled cases, respectively.  For predictive distribution control, we employ pixel-averaged versions of the Brier loss in \autoref{eq:brier} (see \autoref{app:math-brier}).  We evaluate our approaches on Cityscapes validation data (80\% $\D_{cal}$, 20\% $\D_{test}$); in addition, we finetune and evaluate ADP-C on a subset of the GTA5 dataset \citep{richter2016gta} in \autoref{app:exp-segm}.  

We consider three pixel-level confidence scores: the top class softmax probability (\textbf{Top-1}), the difference between top two class probabilities (\textbf{Top-Diff}), and the normalized entropy over a pixel's predictive distribution (\textbf{Entropy}).  In addition, we consider three image-level aggregation strategies: the image's average pixel confidence (\textbf{Mean}), its $0.25$-th quantile (\textbf{Quantile}), and a patch-based approach (\textbf{Patch}), wherein a sliding window of fixed size (\eg, $50 \times 50$ pixels) computes the mean confidence over pixels per patch, and the $\min$ over such patch scores is retrieved.  These aggregations consider both varying levels of prudence (Mean \emph{vs.} Quantile) and granularity (Mean \emph{vs.} Patch).  

\autoref{tab:img-segm-gains} displays obtained efficiency gains for risk control via UCB (\autoref{prop:ucb}) across different risk levels $\epsilon \in \{0.01, 0.05, 0.1\}$.  In line with \autoref{fig:imagenet_100}, increased speed-ups are observed as the risk requirements are relaxed (\ie, $\epsilon$ increases).  Notably, for a given $\epsilon$ the gains for Brier-based risks tend to be smaller than for prediction risks, affirming more challenging risk control.  The differences between combinations of per-pixel confidence and image-level aggregation are most pronounced for small $\epsilon$, where Patch records highest gains while Quantile is more conservative (see \autoref{app:exp-segm} for full results).  In \autoref{fig:segm-qual}, we display a qualitative example of the model's exiting behaviour.  For a sample which exits at the first layer (\emph{top} row), the EENN's confidence map remains fairly stable across subsequent layers, suggesting an accurate model assessment has been reached early on. In contrast, a sample which exits at the final layer (\emph{bottom} row) will see a substantial improvement in model certainty, justifying additional computations.  Such behaviour is also visible when stratifying all samples across their respective model exits (\autoref{fig:segm-qual}, \emph{left}).  For samples which exit later, the difference between distributional losses at the first and final layer increases, affirming that compute is spent meaningfully. 


\begin{figure}[t]
  \centering
  \begin{subfigure}[t]{0.23\textwidth}
    \includegraphics[width=\textwidth]{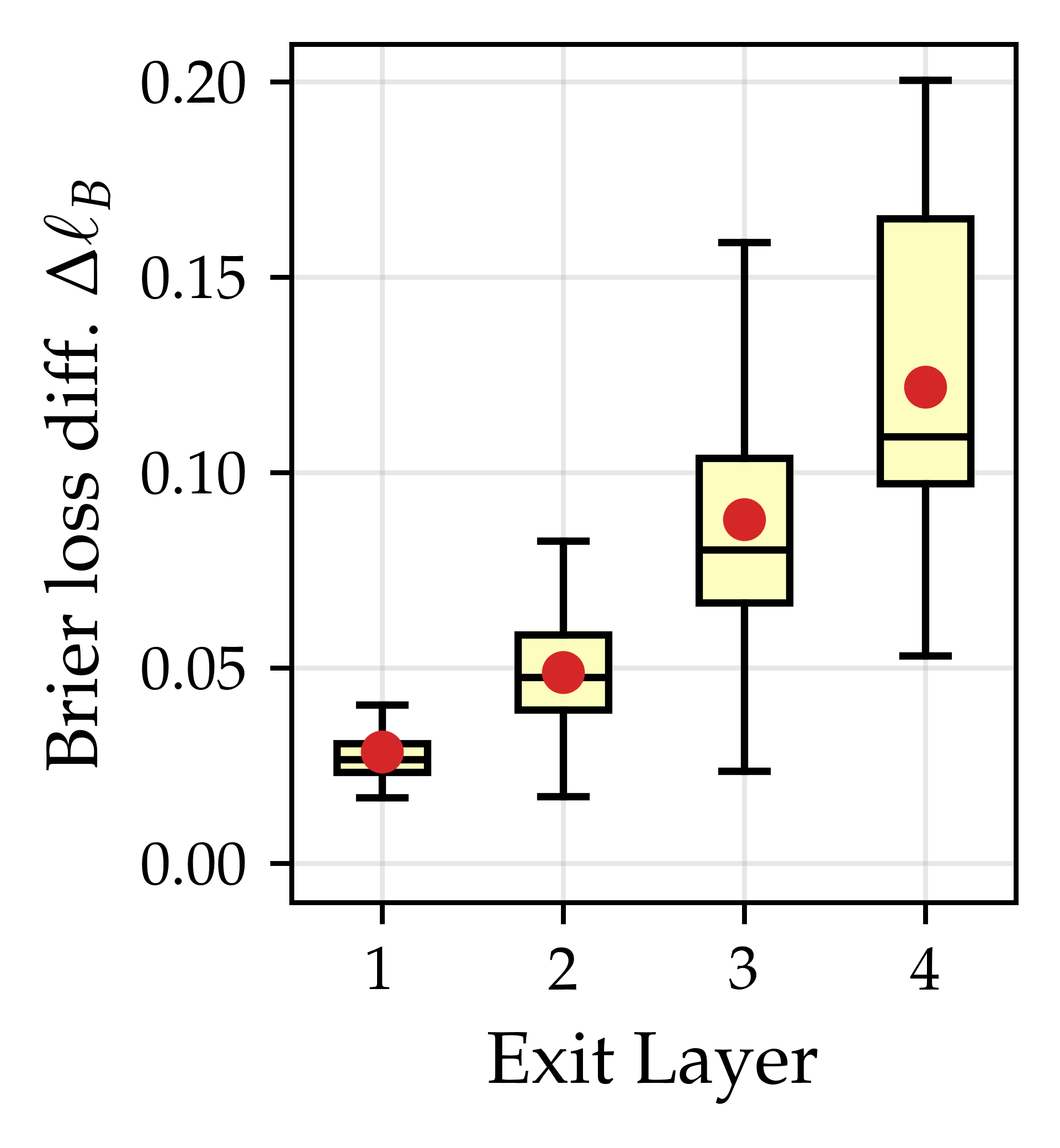}
  \end{subfigure}
  \hfill
  \begin{subfigure}[t]{0.76\textwidth}
    \includegraphics[width=\textwidth]{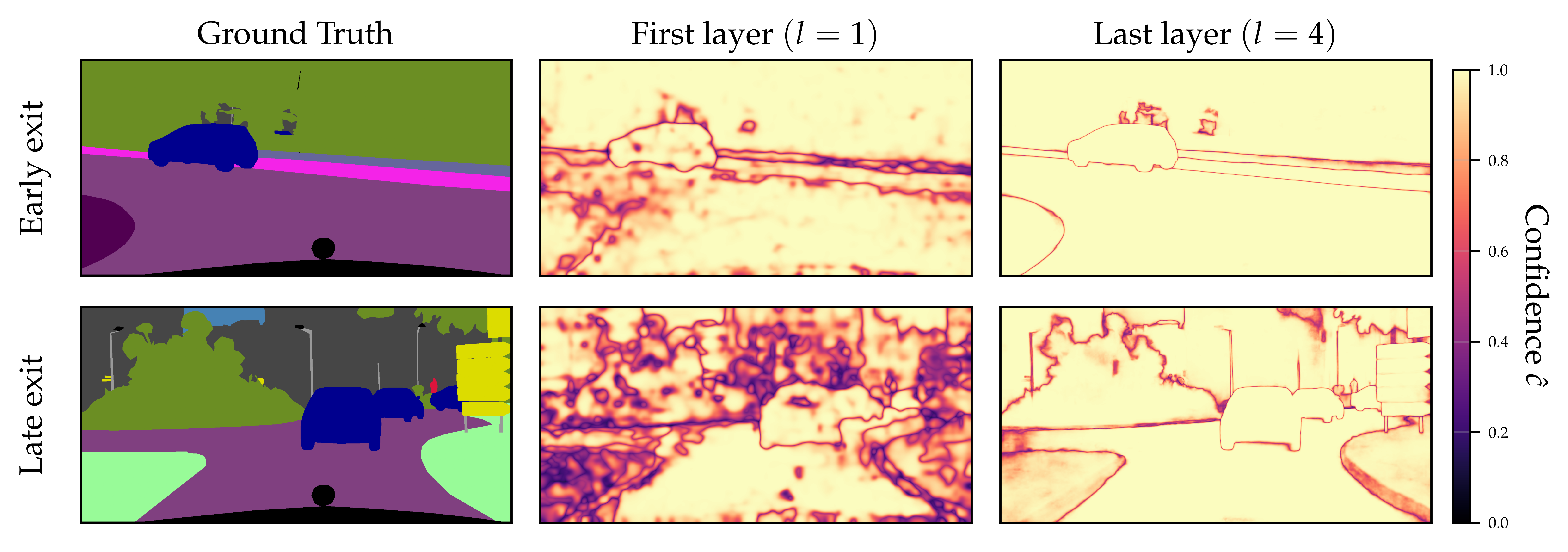}
  \end{subfigure}
  \caption{\emph{Right:} Example of our method's early-exiting on Cityscapes \citep{cordts2016cityscapes}.  For two samples that exit early ($l=1$) and exit late ($l=4$), we display ground truth segmentation masks and confidence maps at the first and last model layer. \emph{Left:} For every sample, we compute the Brier loss difference $\Delta\ell_{B} = |\ell_{B}(\hat{p}_1(\rvy | \vx), \vy) - \ell_{B}(\hat{p}_4(\rvy | \vx), \vy)|$ between first and last model layer (\autoref{eq:brier}), and stratify values across respective exit layers; the red dot denotes the mean.  For both figures, we consider the simplest combination of Top-1 confidence score and mean image-level aggregation (for $\epsilon=0.08$).}
  \label{fig:segm-qual}
  \vspace{-4mm}
\end{figure}

\subsection{Language Modeling}
\label{sec:exp-calm}

For this task, we replicate the main experiments from \citet{schuster2022confident} (CALM), using their early-exit version of the T5 model \citep{raffel2020exploring} for text summarization on CNN/DM \cite{hermann2015teaching} and question answering on SQuAD \cite{rajpurkar2016squad}.  Recall that CALM makes use of the \emph{Learn-then-Test} \citep{angelopoulos2021learn} (LTT) framework for early-exit prediction control, whereas we suggest the \emph{Upper Confidence Bound} \citep{bates2021distribution} (UCB).  In contrast to their experiments which involve excessively large calibration sets ($n \approx 8000$), we explore more practical settings of low calibration sample counts with $n \in \{100, 1000\}$.  Our results for the \emph{performance gap risk} (\autoref{eq:risk-gap}) based on task-specific losses (\emph{ROUGE-L} for CNN/DM and \emph{Token-F1} for SQuAD) are displayed in \autoref{fig:calm} and \autoref{app:exp_calm}.  In all cases, UCB exit thresholds provide larger computational savings over LTT, while ensuring the same risk control with high probability (\autoref{eq:rcp-highprob}).  Since particularly pronounced for $n=100$, these results highlight the need for careful framework selection in order to minimize the cost of providing guarantees.  Once more, risk control in expectation (CRC, \autoref{prop:crc}) permits faster exiting due to its weaker safety assurance.  Encouragingly, even with as few as $n=100$ calibration samples, CRC exit thresholds reach near-optimal exiting, as indicated by their proximity to the ideal (diagonal) risk line.  This suggests that even for modern language tasks, equipping an EENN with notions of safety does not necessitate a strong compromise on inference efficiency.

\subsection{Image Generation with Early-Exit Diffusion}
\label{sec:exp-diff}

To demonstrate the wide-ranging applicability of our proposal, we lastly consider early-exiting for image generation with diffusion.  We employ the DeeDiff model \citep{tang2023deediff}, which performs early-exiting on the denoising network at each sampling step during the reverse diffusion process\footnote{Since the code for DeeDiff is not publicly available, we implement it ourselves (see \autoref{app:implement-diff}).}.  We target control of the \emph{perceptual difference} between images generated by the accelerated and full diffusion processes, which we measure with the LPIPS score \citep{zhang2018unreasonable}, and where lower values indicate perceptually closer images.  Our results on the CelebA dataset \citep{liu2015deep} are shown in \autoref{fig:deediff_celeb} for both risk control via CRC (\autoref{prop:crc}) and UCB (\autoref{prop:ucb}), asserting that the risk is controlled at all levels $\epsilon$.  The impact of the risk level on image generation is additionally visualized for two examples.  For strict control requirements the early-exit generations perceptually resemble the full model, whereas generated image quality visibly deteriorates for larger $\epsilon$ (but remains controlled). For smaller $\epsilon$, the speed-ups within each sampling step are relatively modest (\eg, $\sim$15\% for $\epsilon=0.05$). However, such gains accrue over the large number of sampling steps in the image generation process ($\sim$500), resulting in overall meaningful savings. 
Similar observations for CIFAR \citep{krizhevsky2009learning} are reported in \autoref{app:exp-diff}. 

\begin{figure}[t]
  \centering
  \vspace{-4mm}
  \includegraphics[width=0.9\textwidth]{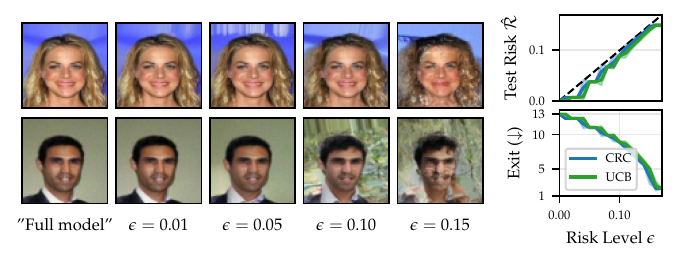}
  \vspace{-3mm}
  \caption{Results for early-exit diffusion with DeeDiff \citep{tang2023deediff} on CelebA \citep{liu2015deep}. \emph{Left:} The quality of generated images is directly related to the target risk control level $\epsilon$. \emph{Right:} Empirical test risks are controlled for both CRC (\autoref{prop:crc}) and UCB (\autoref{prop:ucb}) (for calibration set size $n=500$).  }
  \label{fig:deediff_celeb}
  \vspace{-4mm}
\end{figure}

\subsection{Speculative Decoding for Large Language Models}
\label{sec:exp-decode}

While we primarily focus on early-exiting, our final experiment highlights how risk control can also be applied to other techniques for efficient inference.  Here we consider accelerating the inference of large language models (LLMs) using the (soft) speculative decoding approach \textit{BiLD} \cite{kim2024speculative}.  BiLD uses a small draft model to generate multiple tokens autoregressively while the original LLM is only employed for verification.  
\begin{wrapfigure}{t}{0.5\textwidth}
  \centering
  \includegraphics[width=0.48\textwidth]{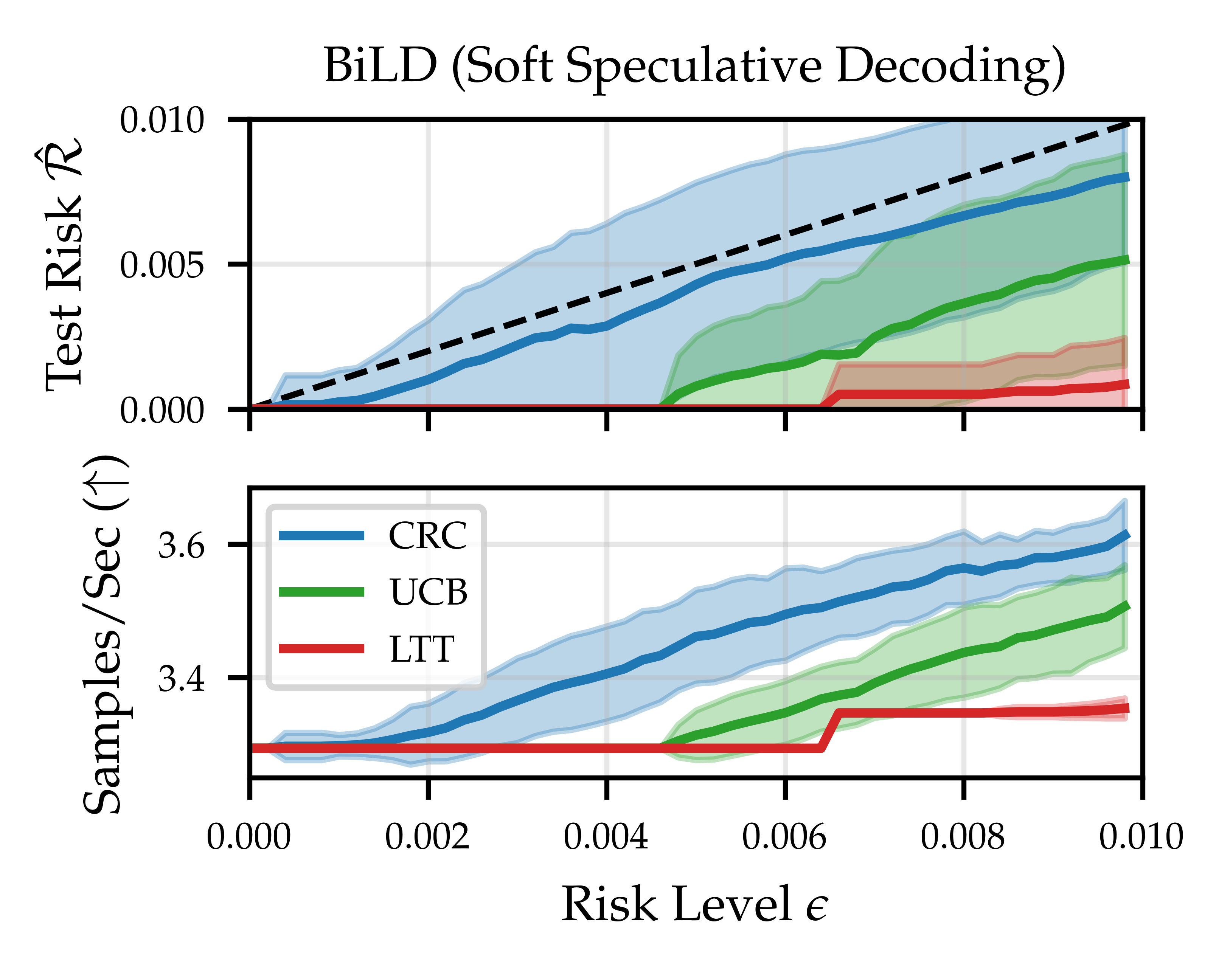}
  \caption{
  Empirical test risk (\emph{top}) and efficiency gains (\emph{bottom}) for the BiLD model \cite{kim2024speculative} for a machine translation task (De--En) on IWSLT \cite{cettolo2017overview} (with $n=500$).  We fix the fallback threshold to $0.5$, and apply risk control to the \emph{rollback} threshold. 
 Our adaptation of UCB again outperforms LTT by yielding larger efficiency gains under the same guarantees.  Shading denotes the standard deviation across $S=100$ calibration/test splits.
  } 
  \label{fig:bild}
  \vspace{-2\baselineskip}
\end{wrapfigure}
This step can be performed in a single forward pass for all proposed tokens, necessitating less computations from the larger, more expensive model.  During verification the difference in token distributions between the models is computed, and the tokens generated by the draft model are rejected if the difference exceeds a tolerated threshold, triggering a ``rollback'' (see Eq. 3 in \cite{kim2024speculative}).  We apply our risk control frameworks to this rollback threshold, which dictates \emph{(i)} the similarity of generated text to the output produced solely by the original LLM, and \emph{(ii)} the associated inference speed-ups in terms of sentences (or samples) per second. 
Our results in \autoref{fig:bild} for the \emph{performance gap risk} (\autoref{eq:risk-gap}), as defined via the difference in sentence-level \emph{BLEU} scores, corroborate our previous findings for language modeling (\autoref{fig:calm} and \autoref{sec:exp-calm}).  That is, our approaches via CRC (\autoref{prop:crc}) and UCB (\autoref{prop:ucb}) provide meaningful speed-ups and improve upon the LTT method \cite{angelopoulos2021learn} used by CALM \cite{schuster2022confident}, while maintaining the desired risk control across test samples.

\vspace{-2mm}
\section{Discussion}
\label{sec:conclusion}

Our work addresses how to select a `safe' exiting mechanism for early-exit neural networks (EENNs).  We propose balancing the EENN's efficiency \emph{vs.}~performance trade-off via risk control, ensuring that accelerated inference does not compromise the quality of the early-exit outputs.  We validate our light-weight, \emph{post-hoc} solution on a variety of tasks 
and improve upon prior work \cite{schuster2022confident} (see \autoref{sec:related}).

\vspace{-2mm}
\paragraph{Limitations and Future Work.} A key limitation of our work is the reliance on a single shared exit threshold among layers (\autoref{eq:eenn}). While using a shared exit threshold is common \citep{tang2023deediff, liu2021anytime, kaya2019shallow, wolczyk2021zero, zhou2020bert, xin2020deebert}, relaxing this condition could lead to further efficiency gains. This, however, introduces new challenges both in terms of theory (\eg, defining monotonicity requirements) and practice (\eg, substantially larger search spaces). Overcoming them by adopting recently proposed risk control techniques for high-dimensional thresholds \cite{ringel2024early, teneggi2023trust} could prove promising for future extensions. Another (simple) workaround is to reduce the multi-dimensional problem back to a single threshold by use of a \emph{threshold function}, as partially explored in \cite{schuster2022confident}.  Instead of working directly with multiple thresholds, our risk control framework can then be applied to this scalar parameter. Additionally, multiple risk control extensions provide natural avenues for future work.  Firstly, risk control as used in our work is achieved only \emph{marginally} across observations (\autoref{eq:risk}), and one could aspire for more granular \emph{exit-conditional} control \citep{ringel2024early}.  Secondly, the employed risk control frameworks define risk in terms of the \emph{expected} loss.  One could instead aim to control the tails of the loss, \eg, via specific quantiles of interest \citep{snell2022quantile}. Lastly, relaxing the \emph{i.i.d} assumption on calibration and test data could help extend risk-controlling EENNs to scenarios with test-time distribution shifts \citep{zollo2023prompt} or to online updating strategies \citep{feldman2023achieving}.

\vspace{-2mm}
\paragraph{Broader Impacts.} EENNs provide a simple and effective approach to dynamic computation \citep{han2021dynamic}.  Their computational savings can reduce energy costs and the carbon footprint, as well as allow the model to be deployed on resource-constrained hardware. By incorporating a 'safe' exit mechanism into these models, we improve their trustworthiness and strengthen the reliability of their intermediate outputs, along with any decisions based on them. This facilitates safer model deployment in real-world applications and contributes to more responsible decision-making.  While we do not foresee any direct negative consequences from our work, improper use of our risk control framework can lead to violations or misinterpretations of its provided guarantees.  This, in turn, can risk instilling a false sense of security. Overall, we believe that our work outlines an effective approach to improve the reliability of EENNs and to safely balance their inherent efficiency \emph{vs.}~performance trade-off.  In doing so, it contributes to the goal of developing models that are ultimately \emph{fast yet safe}.



\begin{ack}
    We thank Mona Schirmer, Rajeev Verma, Christoph-Nikolas Straehle, Patrick Forré and Stephen Bates for helpful discussions and clarifications.  We are also grateful to the anonymous reviewers who helped improve the work with their constructive feedback.  This project was generously supported by the Bosch Center for Artificial Intelligence.  Eric Nalisnick did not utilize resources from Johns Hopkins University for this project.
\end{ack}


{
\bibliographystyle{plainnat}
\bibliography{main}
}


\clearpage
\setcounter{page}{1}
\appendix

\section*{APPENDIX}
The appendix is organized as follows:
\begin{itemize}[leftmargin=13pt]
    \item \autoref{app:math} contains mathematical details, namely proofs for our propositions (\autoref{app:math-rcp}), elaboration on bounding conditions of the loss function (\autoref{app:math-bounded-loss}), and further details on our Brier score formulation (\autoref{app:math-brier}).
    
    \item \autoref{app:algos} contains algorithmic descriptions of risk control via the \emph{Upper Confidence Bound} \citep{bates2021distribution} (UCB, \autoref{prop:ucb}) with the Waudby-Smith-Ramdas bound \citep{waudby2024estimating} (WSR, \autoref{prop:wsr-ext}) in \autoref{algo:ucb} and \autoref{algo:wsr}.
    
    \item \autoref{app:implement} contains various implementation details of our four experiments: image classification (\autoref{app:implement-classif}), semantic segmentation (\autoref{app:implement-segm}), language modeling (\autoref{app:implement-calm}), and image generation (\autoref{app:implement-diff}).
    
    \item \autoref{app:results} contains additional results across our suite of experiments (in the same order as above), including risk control and efficiency curves for varying risk levels $\epsilon$ (\autoref{app:img_cls}, \autoref{app:exp-segm}, \autoref{app:exp_calm}, \autoref{app:exp-diff}), as well as additional efficiency gain tables (\autoref{app:img_cls}, \autoref{app:exp-segm}).
\end{itemize}

\section{Mathematical Details}
\label{app:math}

\subsection{Proofs for Risk Control}
\label{app:math-rcp}

We begin by formalizing the connection between marginal monotonicity requirements on the EENN (\autoref{eq:marg_mono}) and the monotonicity of risks (\autoref{eq:risk}) in \autoref{lemma:marg-mono} below.

\begin{lemma}
\label{lemma:marg-mono}
    A marginally monotone EENN satisfying \autoref{eq:marg_mono} for some arbitrary loss function $\ell$ implies monotone decreasing risks of the form in \autoref{subsec:methods-risks}, \ie, we have that $\R(\lambda_1) \ge \R(\lambda_2)$ for $\lambda_1 \le \lambda_2$.
\end{lemma}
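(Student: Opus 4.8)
The plan is to reduce the claim to a single monotone expected-loss term and then exploit the nested structure of the early-exit regions induced by a growing threshold. First I would note that in both risk forms of \autoref{subsec:methods-risks} the reference (full-model) term is independent of $\lambda$: in $\R^G(\lambda)$ the term $\E_{(\vx,\vy)\sim\p}[\ell(\hat{\vo}_L(\vx),\vy)]$ contains no $\lambda$, and in $\R^C(\lambda)$ both $\hat{\vy}_L$ and $\hat{\vo}_L(\vx)$ are likewise $\lambda$-free (for prediction losses this term even vanishes). Hence $\R(\lambda_1)\ge\R(\lambda_2)$ for $\lambda_1\le\lambda_2$ is equivalent to showing that $g(\lambda):=\E_{(\vx,\vy)\sim\p}[\ell(\hat{\vo}_\lambda(\vx),\vy)]$ (and its consistency analogue with $\vy$ replaced by $\hat{\vy}_L$) is nonincreasing in $\lambda$.

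Next I would establish the structural monotonicity of the exit mechanism. For a fixed input $\vx$ the confidences $\hat{\vc}_1(\vx),\dots,\hat{\vc}_{L-1}(\vx)$ are fixed, so raising $\lambda$ can only delete indices from the active set $E$ in \autoref{eq:eenn}, weakly increasing $\min E$ (or emptying $E$ and sending the exit to $L$). Thus the exit layer $e(\vx,\lambda)$ is pointwise nondecreasing in $\lambda$, and the regions $S_l(\lambda):=\{\vx:e(\vx,\lambda)\le l\}$ are nested and shrink as $\lambda$ grows. This lets me write the layer-cake decomposition
\[ g(\lambda)=\sum_{l=1}^{L}\E_{\vx}\big[\ell(\hat{p}_l(\rvy|\vx),\vy)\,\mathbbm{1}[e(\vx,\lambda)=l]\big], \]
and compare $g(\lambda_1)$ with $g(\lambda_2)$ by tracking the probability mass that the nested regions move from earlier to later exit layers as the threshold increases.

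The hard part will be bridging this mass-transport picture back to the \emph{marginal} assumption \autoref{eq:marg_mono}. The decomposition integrates each per-layer loss over a confidence-determined \emph{subset} $S_l(\lambda)$, whereas \autoref{eq:marg_mono} only controls per-layer losses averaged over the \emph{entire} space $\X$. For the transported mass one must compare $\ell(\hat{p}_{e(\vx,\lambda_1)}(\rvy|\vx),\vy)$ with $\ell(\hat{p}_{e(\vx,\lambda_2)}(\rvy|\vx),\vy)$ precisely on those samples whose exit changed, and marginal monotonicity offers no per-subset handle on this difference — this is exactly the \emph{marginal} versus \emph{conditional} distinction highlighted in \autoref{subsec:methods-rcp}, since a low-loss sample carrying low confidence could be pushed to a worse later layer (the \emph{overthinking} phenomenon), locally increasing $g$. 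I therefore expect the crux to be an aggregation step that sums \autoref{eq:marg_mono} across layers and uses the nested regions $S_l(\lambda)$ to argue that such local increases cancel or are dominated in aggregate; I would scrutinize this step most closely, as it is where the passage from an entire-space average to subset-restricted averages must be justified, and where any additional structure relating confidence scores to loss improvement would have to enter.
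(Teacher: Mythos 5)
Your setup coincides with the paper's proof up to the point where you stop: the paper likewise observes that for each fixed $\vx$ the exit layer is nondecreasing in the threshold (it writes $l_1 \le l_2$ for $\lambda_1 \le \lambda_2$) and that the full-model reference term is $\lambda$-free, and it then concludes in one line that $\E[\ell(\hat{\vo}_{l_1}(\vx),\vy)] \ge \E[\ell(\hat{\vo}_{l_2}(\vx),\vy)]$ with the inequality annotated as following from \autoref{eq:marg_mono}. It performs no aggregation over the confidence-defined strata and never addresses the issue you raise, even though $l_1$ and $l_2$ are sample-dependent. So your proposal is incomplete at its declared crux, but you have not overlooked an idea that the paper supplies; you have located the exact step that the paper asserts without justification.

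Your concern at that step is substantive, not merely a technicality. Take $L=2$ with two confidence strata and thresholds $\lambda_1 < \lambda_2$ separating them: then $\R(\lambda_1)-\R(\lambda_2)$ equals the expectation of $\ell(\hat{\vo}_{1}(\vx),\vy)-\ell(\hat{\vo}_{2}(\vx),\vy)$ restricted to the low-confidence stratum (the samples whose exit moves from $1$ to $2$ as the threshold rises), whereas \autoref{eq:marg_mono} only constrains the unrestricted expectation of that difference. If overthinking happens to concentrate on the low-confidence stratum -- i.e.\ the first exit outperforms the last exactly on those samples -- the restricted expectation is negative and the lemma's conclusion fails while \autoref{eq:marg_mono} still holds, provided the high-confidence stratum compensates. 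Closing the argument therefore needs a stronger hypothesis than stated: either require \autoref{eq:marg_mono} to hold conditionally on each confidence-defined sub-population $\{\vx : e(\vx,\lambda) = l\}$, or take monotonicity of $\lambda \mapsto \R(\lambda)$ itself as the primitive assumption, which is all that \autoref{prop:crc} and \autoref{prop:ucb} actually consume. Your instinct to scrutinize the passage from a whole-space average to subset-restricted averages is exactly right; no additional structure in the paper rescues that passage.
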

\begin{proof}
    For a given test sample $\vx$, denote the exit layers corresponding to exit thresholds $\lambda_1$ and $\lambda_2$ as $l_1$ and $l_2$. From the EENN's confidence-based exiting mechanism in \autoref{eq:eenn} it follows that $l_1 \le l_2$, \ie, a smaller exit threshold $\lambda_1$ will result in exits that are earlier or equal to the larger threshold $\lambda_2$.  From our risk formulation in terms of relative exit performances in \autoref{subsec:methods-risks} and marginal monotonicity according to \autoref{eq:marg_mono} it then follows that
    \begin{align*}
        \mathcal{R}(\lambda_1) &= \mathbb{E}_{(\vx, \vy) \sim \mathcal{P}}[\ell (\hat{\bm{o}}_{l_1}(\vx), \vy) - \ell (\hat{\bm{o}}_{L}(\vx), \vy)] \\
        &\stackrel{\text{(\autoref{eq:marg_mono})}}{\ge} \mathbb{E}_{(\vx, \vy) \sim \mathcal{P}}[\ell (\hat{\bm{o}}_{l_2}(\vx), \vy) - \ell (\hat{\bm{o}}_{L}(\vx), \vy)] 
    =  \mathcal{R}(\lambda_2).
    \end{align*}
\end{proof}

Next, we prove \autoref{prop:crc}, our adaptation of the \emph{Conformal Risk Control} \citep{angelopoulos2022conformal} (CRC) guarantee on risk control \emph{in expectation} for the early-exit setting.  Our proof closely follows the proof for Thm. 1 in \citet{angelopoulos2022conformal}, but we relax the condition on monotone \emph{losses} to that on monotone \emph{risks}, which implies assuming marginal monotonicity on the EENN according to \autoref{lemma:marg-mono}.  We restate our proposition from the main paper for completeness first.

\begin{proposition1}
    Let $\ell: \Lambda \rightarrow (-\infty, B]$ be a right-continuous bounded loss, and assume a marginally monotone EENN (\autoref{eq:marg_mono}).  Then the exit threshold $\hat{\lambda}_{\text{CRC}}$ ensures risk control in expectation, \ie, it holds that $\;\; \E_{\D_{cal} \sim \p^n}\big[ \R(\hat{\lambda}_{\text{CRC}}) \big] \le \epsilon \;\;$ for any $\epsilon \in (0,1)$.
\end{proposition1}
\begin{proof}
    Consider the calibration set $\D_{cal} = \{(\vx_i, \vy_i)\}_{i=1}^n \sim \p^n$ and some test sample $(\vx, \vy) \sim \mathcal{P}$ drawn \emph{i.i.d} from $\p$, and denote their union set as $\tilde{\D} := \D_{cal}\;\cup\;(\vx, \vy)$.  Additionally, define $\ell_i(\lambda) := \ell(\hat{\bm{o}}_{\lambda}(\vx_i), \vy_i)$ as the loss of the EENN's early-exit output for the $i$-th sample.  In particular, $\ell_{n+1}(\lambda)$ now denotes the test sample's loss.  Let us first recall the definition of $\hat{\lambda}_{\text{CRC}}$ (\autoref{eq:lam-crc}):
    \begin{equation*}
        \hat{\lambda}_{\text{CRC}} := \min \{\lambda \in \Lambda: \frac{n}{n+1}\hat{\cal{R}} (\lambda ; \D_{cal}) + \frac{B}{n+1} \le \epsilon \}.
    \end{equation*}
    Note that $\Lambda$ is a discrete grid of values over $[0, 1]$, \eg, equidistant values $\{0.01, 0.02, \dots, 1\}$, and ff $\hat{\Lambda} = \emptyset$ and the risk control condition is never met, we default to $\hat{\lambda} = 1$ for all threshold selection procedures. Thus, the $\min$ is well-defined.  Next, consider the empirical risk $\hat{\R}_{n+1} (\lambda ; \tilde{\D})$ computed over $\tilde{\D}$ using the $n+1$ available samples, and define the following threshold choice:
    \begin{equation}
    \label{eq:prop-1-lam-hat}
        \hat{\lambda}_{n+1} := \min \{\lambda \in \Lambda : \hat{\R}_{n+1} (\lambda ; \tilde{\D}) \le \epsilon\}.
    \end{equation}
    $\hat{\lambda}_{n+1}$ is always well-defined, since $\hat{\R}_{n+1}(\lambda = 1)$ is zero for the risks introduced in \autoref{subsec:methods-risks}.  As we assume a bounded loss function $\ell \leq B$, we observe that for any $\lambda \in \Lambda$ we have
    \begin{align*}
        \hat{\R}_{n+1}(\lambda ; \tilde{\D}) = \frac{n}{n+1}\hat{\R}(\lambda; \D_{cal})  + \frac{\ell_{n+1}(\lambda)}{n+1} \le \frac{n}{n+1}\hat{\R}(\lambda; \D_{cal})  + \frac{B}{n+1},
    \end{align*}
    which implies that $\hat{\lambda}_{n+1} \leq \hat{\lambda}_{\text{CRC}}$.  Since we assume a marginally monotone EENN, by \autoref{lemma:marg-mono} it follows that the risk is monotone decreasing and by $\R (\hat{\lambda}_{n+1}) \geq \R (\hat{\lambda}_{\text{CRC}})$ we also have that
    \begin{equation}
    \label{eq:prop-1-mono-step}
        \E_{\D_{cal} \sim \p^n} [\R (\hat{\lambda}_{n+1})] \ge \E_{\D_{cal} \sim \p^n} [\R (\hat{\lambda}_{\text{CRC}})].
    \end{equation}
    By \autoref{eq:risk} and our loss definition above, we can rewrite for a general $\lambda$ the expression
    \begin{equation*}
        \E_{\D_{cal} \sim \p^n}[\R(\lambda)] = \E_{\D_{cal} \sim \p^n}[\E_{(\vx, \vy) \sim \mathcal{P}}[\ell_{n+1}(\lambda)]] \stackrel{\textit{i.i.d}}{=} \E_{\tilde{\D} \sim \p^{n+1}}[\ell_{n+1}(\lambda)],
    \end{equation*}
    in short $\E[\ell_{n+1}(\lambda)]$.  The remainder of the proof follows Thm. 1 in \citet{angelopoulos2022conformal}.  Assume a particular set $\tilde{\D}$ is given.  Then the threshold $\hat{\lambda}_{n+1}$ is void of randomness and a constant, and by the \emph{i.i.d} condition we also have that $\ell_{n+1}(\lambda) | \tilde{\D} \sim \text{Unif}(\{\ell_1, \ldots, \ell_{n+1}\})$ is uniformly distributed.  Combining these observations with the law of total expectation (l.o.t.e.) and right-continuity (r.c.) of the loss, the final result follows:
    \begin{align*}
        \mathbb{E} [\ell_{n+1}(\hat{\lambda}_{\textrm{CRC}})] \stackrel{\text{(\autoref{eq:prop-1-mono-step})}}{\le} \mathbb{E} [\ell_{n+1}(\hat{\lambda}_{n+1})] \stackrel{\text{l.o.t.e.}}{=} \mathbb{E} \big[ \mathbb{E}  [\ell_{n+1}(\hat{\lambda}_{n+1})\,|\,\tilde{\D}\,] \big] \\ \stackrel{\text{Unif}}{=} \mathbb{E}\bigg[\frac{1}{n+1}\sum_{i=1}^{n+1} \ell_{i}(\hat{\lambda}_{n+1})\bigg]  \stackrel{\text{(\autoref{eq:prop-1-lam-hat}) \& r.c.}}{\le} \mathbb{E}[\epsilon] = \epsilon.
    \end{align*}
\end{proof}

Next, we sketch a proof for \autoref{prop:ucb}, our adaptation of the \emph{Upper Confidence Bound} \citep{bates2021distribution} (UCB) guarantee on risk control \emph{with high probability} for the early-exit setting.  Our main change includes modifying the Waudby-Smith-Ramdas bound \citep{waudby2024estimating} (WSR) to relax the bounding condition on the loss from $\ell \in [0,1]$ to $\ell \in [-B, B]$ (\autoref{prop:wsr-ext}).  We first restate our proposition from the main paper.

\begin{proposition2}
    Let $\ell:\Lambda \rightarrow  [-B, B]$ be a bounded loss, and assume a marginally monotone EENN (\autoref{eq:marg_mono}).  Then the exit threshold $\hat{\lambda}_{\text{UCB}}$ ensures risk control with high probability, \ie, it holds that $\;\; \pr_{\D_{cal} \sim \p^n}(\R(\hat{\lambda}_{\text{UCB}}) \le \epsilon) \ge 1 - \delta \;\;$ for any $(\epsilon, \delta) \in (0,1)^2$.
\end{proposition2}
\begin{proof}
    Our result follows almost directly from the proofs in \citet{bates2021distribution} (for their Thm. A.1), where we leverage the required risk monotonicity by \autoref{lemma:marg-mono}.   We omit the technical requirement on risk continuity from the original proof, since a relaxation to non-continuous risks is permitted (\citep{bates2021distribution}, Remark 3).  A proof that the WSR bound can be used to construct a valid upper confidence bound can be found in \citet{bates2021distribution}, Sec. 3.1.3.  However, an assumption on losses bounded to $\ell \in [0,1]$ is made, which is overly restrictive for the early-exit setting.  We relax this assumption to $\ell \in [-B, B]$ in \autoref{prop:wsr-ext} below, concluding the result.
\end{proof}

Since our risk definitions in \autoref{subsec:methods-risks} can naturally assume negative values, and we thus want to account for the occurrence of earlier exits performing better, we relax the bounding condition on the loss function for the Waudby-Smith-Ramdas bound \citep{waudby2024estimating} (WSR) in the following proposition.
\begin{proposition}
\label{prop:wsr-ext}
    A valid upper confidence bound (\autoref{eq:ucb}) based on the Waudby-Smith-Ramdas bound can be constructed for losses $\ell \in [-B, B]$ with $B>0$.
\end{proposition}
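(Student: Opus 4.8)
The plan is to reduce the general case $\ell \in [-B, B]$ to the standard case $\ell \in [0,1]$ by an affine reparametrization of the loss, and then invoke the original WSR construction of \citet{waudby2024estimating, bates2021distribution} verbatim. First I would define the rescaled loss $\tilde{\ell} := (\ell + B) / (2B)$, which maps $[-B, B]$ bijectively onto $[0,1]$ via an increasing affine transformation. Correspondingly, the rescaled risk is $\tilde{\R}(\lambda) = \E[\tilde{\ell}] = (\R(\lambda) + B) / (2B)$, and the empirical rescaled risk $\hat{\tilde{\R}}(\lambda; \D_{cal})$ relates to $\hat{\R}(\lambda; \D_{cal})$ by the same affine map. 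Because $\tilde{\ell} \in [0,1]$, the original WSR result applies directly and yields an empirical upper confidence bound $\hat{\tilde{\R}}^{+}(\lambda; \D_{cal})$ satisfying $\pr_{\D_{cal} \sim \p^n}(\tilde{\R}(\lambda) \le \hat{\tilde{\R}}^{+}(\lambda; \D_{cal})) \ge 1 - \delta$ for all $\lambda \in \Lambda$.

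The second step is to transport this bound back to the original scale. Since the affine map $r \mapsto 2Br - B$ is strictly increasing, the event $\tilde{\R}(\lambda) \le \hat{\tilde{\R}}^{+}(\lambda; \D_{cal})$ is identical to the event $\R(\lambda) \le 2B\,\hat{\tilde{\R}}^{+}(\lambda; \D_{cal}) - B$. I would therefore define the upper confidence bound on the original scale as $\hat{\R}^{+}(\lambda; \D_{cal}) := 2B\,\hat{\tilde{\R}}^{+}(\lambda; \D_{cal}) - B$. The equivalence of the two events immediately gives $\pr_{\D_{cal} \sim \p^n}(\R(\lambda) \le \hat{\R}^{+}(\lambda; \D_{cal})) \ge 1 - \delta$ for all $\lambda \in \Lambda$, which is exactly the validity condition \autoref{eq:ucb}. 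This confirms that the construction produces a valid upper confidence bound for losses bounded in $[-B, B]$.

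The main thing to verify carefully — and what I expect to be the only genuine subtlety — is that the WSR bound is invariant in the right way under this affine reparametrization, i.e. that plugging the rescaled per-sample losses $\tilde{\ell}_i(\lambda)$ into the WSR martingale construction and then applying the inverse affine map recovers a bound on the \emph{original} risk rather than some distorted quantity. This holds because the WSR bound depends on the data only through the per-sample loss values and produces a confidence statement about their mean; since expectation commutes with the affine map, the bound on $\tilde{\R}(\lambda)$ translates cleanly into a bound on $\R(\lambda)$. No reworking of the WSR martingale or its concentration argument is required; the content is entirely in recognizing that the $[0,1]$ restriction in the original statement is a normalization convention rather than an essential hypothesis, and that any bounded loss can be normalized without loss of generality. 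I would close by noting that this relaxation is precisely what enables the early-exit risks of \autoref{subsec:methods-risks} — which take negative values when earlier exits outperform the full model (overthinking) — to be handled directly, rather than being clipped at zero.
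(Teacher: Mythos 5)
Your proof is correct, but it takes a genuinely different route from the paper's. You reduce to the standard $[0,1]$ case by the affine reparametrization $\tilde{\ell} = (\ell + B)/(2B)$, invoke the original WSR construction as a black box, and transport the resulting bound back through the (strictly increasing) inverse map; validity transfers because the coverage event is preserved pointwise. The paper instead reworks the WSR martingale directly on the original scale: it redefines the predictable mixing weight as $\nu_i = \min\{1/2B, \sqrt{2\log(1/\delta)/(n\sigma^2_{i-1})}\}$, observes that $(\ell_i - \E[\ell_i]) \in [-2B, 2B]$ so that $1 - \nu_i(\ell_i - \E[\ell_i]) \ge 0$, concludes that $\kappa_i$ remains a non-negative (super)martingale, and then falls back on Prop.~5 of \citet{bates2021distribution}. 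The two constructions are essentially the same bound up to the prior-like initialization constants ($\mu_0 = 1/2$, $\sigma_0^2 = 1/4$), which live on different scales in the two versions but do not affect validity. What your reduction buys is economy: no part of the concentration argument needs to be reopened, and the $[0,1]$ hypothesis is exposed as a normalization convention. What the paper's direct route buys is an explicit handle on the capping constant, which it then exploits in its Remark: under marginal monotonicity the risk is non-negative, so $(\ell(\lambda,L) - \R(\lambda,L)) \in [-2B, B]$ and the cap can be relaxed from $1/2B$ to $1/B$, tightening the bound. Your rescaling argument could recover this too (restricting the candidate grid for $\tilde{\R}$ to $[1/2, 1]$ permits a cap of $2$ on $\tilde{\nu}_i$), but as written it does not, and that refinement is the one substantive thing the symmetric normalization leaves on the table.
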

\begin{proof} 
    Observe the definitions of individual components ($\mu_i, \sigma^{2}_i, \nu_i, \kappa_i$) for the WSR bound in \autoref{algo:wsr}.  In particular, define $\nu_i = \min\{1/2B, \sqrt{\frac{2 \log(1 / \delta)}{n \sigma^2_{i-1}}} \}$.  Since the second term is always non-negative, it follows that $\nu_i \in [0, 1/2B]$. For the loss $\ell_i \in [-B, B]$, we then have that $(\ell_i - \mathbb{E}[\ell_i]) \in [-2B, 2B]$.  Hence, it follows that $ 1 - \nu_i (\ell_i - \mathbb{E}[\ell_i]) \ge 0$, which implies that $\kappa_i  = \prod_{j=1}^i \{1 - \nu_j (\ell_j - \mathbb{E}[\ell_j]) \}$ is a non-negative martingale.  The rest of the proof follows Prop. 5 from \citet{bates2021distribution}, concluding the result.
\end{proof}

Observe that the proof for \autoref{prop:wsr-ext} utilizes the fraction $1/2B$ in its definition of $\nu_i$ which can make the WSR bound less tight. However, in the case of a marginally monotone EENN  (\autoref{lemma:marg-mono}) we can relax this fraction to $1/B$ instead, a comment we formalize in the following remark. 

\begin{remark}
Note that in the case of a marginally monotone EENN, the bound from \autoref{prop:wsr-ext} can be further optimized. For a bounded loss $\ell \in [0, B]$, the relative early-exit loss $\ell(\lambda, L) := \ell \big(\hat{\vo}_{\lambda}(\vx), \vy \big) - \ell \big(\hat{\vo}_L(\vx), \vy \big)$ will fall in the $[-B, B]$ range (see also \autoref{subsec:methods-risks} and \autoref{app:math-bounded-loss}). Additionally, due to the marginal monotonicity assumption (\autoref{eq:marg_mono}) the risk based on the relative loss will be non-negative, \ie, $\mathcal{R} (\lambda, L) = \mathbb{E}[\ell(\lambda, L)] \in [0, B]$. This implies that $(\ell(\lambda, L) - \mathcal{R}(\lambda, L)) \in [-2B, B]$. Hence, $\kappa_i$ will be a non-negative martingale even when the upper bound $1/B$ is used for $\nu_i$ instead of $1/2B$. 
\end{remark}

\subsection{On Bounding of the Loss Function}
\label{app:math-bounded-loss}

The risks outlined in \autoref{subsec:methods-risks} rely on an early-exit loss definition in terms of relative exit performance.  That is, our risks take the general form
\begin{equation}
\label{eq:eenn-loss}
    \R (\lambda) = \mathbb{E}_{(\vx, \vy) \sim \p} \big[ \ell(\lambda, L)], \quad \ell(\lambda, L) := \ell \big(\hat{\vo}_{\lambda}(\vx), \vy \big) - \ell \big(\hat{\vo}_L(\vx), \vy \big),
\end{equation}
with $\ell(\lambda, L)$ denoting the relative early-exit loss.  Recall that $\hat{\vo}_{\lambda}$ and $\hat{\vo}_{L}$ are based on $\hat{p}_{\lambda}$ and $\hat{p}_{L}$, respectively.  For a bounded loss $\ell \in [0, B]$, we then have that $\ell(\lambda, L) \in [-B, B]$.  In our early-exit setting, negative losses have an intuitive interpretation.  The associated test samples are cases where the EENN \emph{overthinks} \cite{kaya2019shallow, jazbec2024towards}, \ie, the early-exit $\hat{p}_{\lambda}$ performs better than the final exit $\hat{p}_L$.  

Risk control via CRC (\autoref{prop:crc}) or UCB (\autoref{prop:ucb}) with the relaxed WSR bound (\autoref{prop:wsr-ext}) conveniently account for such occurrences.  In contrast, this presents a challenge for the \emph{Learn-then-Test} \citep{angelopoulos2021learn} (LTT) framework employed by \citet{schuster2022confident}, since the underlying Hoeffding-Bentkus \citep{bentkus2004hoeffding} (HB) bound requires $\ell(\lambda, L) \in [0, 1]$.  As a workaround, \citet{schuster2022confident} instead impose a lower loss limit of zero, \ie, they use
\begin{equation}
\label{eq:eenn-loss-nonneg}
     \ell(\lambda, L) := \max \{ 0, \;\ell \big(\hat{\vo}_{\lambda}(\vx), \vy \big) - \ell \big(\hat{\vo}_L(\vx), \vy \big) \}.
\end{equation}
While solving their technical requirement, it introduces a key drawback in that the risk control procedure cannot account for samples where the risk requirement is satisfied `for free'.  This introduces substantially more conservative early-exiting (see \autoref{fig:relax_loss}), since an upper bound on the empirical calibration risk is used for threshold tuning.  

In addition, observe that the Brier score is naturally bounded by $[0, 2]$ for the multi-class setting \citep{brier1950verification}. Thus, our relative early-exit \emph{Brier losses} assume values in the range of $[-2, 2]$.  While acceptable for risk control via CRC and UCB (by setting $B=2$), it once again does not align with the LTT requirement on $\ell(\lambda, L) \in [0, 1]$.  Thus, applying LTT requires additional restrictions such as scaling (\eg, with a $1/2$ term) and \autoref{eq:eenn-loss-nonneg} to satisfy the bounds.  This highlights another drawback where the intuitive risk interpretation as a Brier score difference is partially lost (see \autoref{app:math-brier} below).  

Note that while CRC and UCB are amenable to $\ell(\lambda, L) \in [-B, B]$, it can still be beneficial to ensure non-negative losses (\eg, by \autoref{eq:eenn-loss-nonneg}) in order to improve the marginal monotonicity of the EENN (\autoref{eq:marg_mono}).  Namely, it can happen that an EENN is \emph{not} marginally monotone for an unrestricted loss (\autoref{eq:eenn-loss}), but \emph{is} for its zero-bounded counterpart (\autoref{eq:eenn-loss-nonneg}).  Hence, such approaches might be useful when there is reason to believe that EENN violates its marginal monotonicity assumption, though in such cases a practitioner might better opt for the pruning of unnecessary model layers instead. 

\begin{figure}[t]
  \centering
  \includegraphics[width=\textwidth]{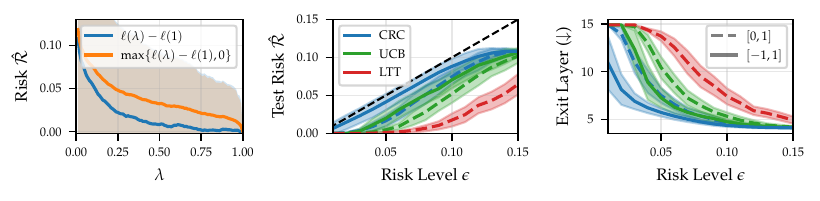}
  \caption{
  We display risk curves based on relative early-exit losses $\ell(\lambda, L)$ for the CALM model \citep{schuster2022confident} on CNN/DM (\emph{left}).  The risk based on the zero-bounded losses (\protect\orangeline; \autoref{eq:eenn-loss-nonneg}) naturally upper-bounds the one without (\protect\blueline; \autoref{eq:eenn-loss}).  Since LTT requires $\ell(\lambda, L) \in [0, 1]$, threshold tuning is performed based on the larger risk (\protect\orangeline), despite aiming to control the actual risk (\protect\blueline).  This results in overly conservative early-exiting, as indicated by the LTT test risk (\protect\reddashedline) deviating furthest from the optimal (diagonal) risk line (\emph{middle}), and its lowest empirical gains (\emph{right}).  Since CRC and UCB permit negative losses, we display their results for both zero-bounded (\protect\dashedline; \autoref{eq:eenn-loss-nonneg}) and unrestricted (\protect\blackline; \autoref{eq:eenn-loss}) settings.  Also here, the unrestricted setting results in controlled risks with larger efficiency gains, highlighting its relevance for the early-exit setting with relative losses of the form $\ell(\lambda, L)$.
  }
  \label{fig:relax_loss}
\end{figure}

\subsection{Brier score formulation} 
\label{app:math-brier}

\paragraph{Brier score motivation.}  The Brier score is a \emph{strictly proper scoring rule} \citep{gneiting2007strictly, gneiting2007probabilistic}, ensuring its suitability to assess probabilistic forecasts.  This can be demonstrated by its decomposition structure, which highlights that both calibration and sharpness properties of the forecaster are taken into account \cite{murphy1973new, dawid2004probability}.  Moreover, its mathematical formulation lends itself favorably to risk control when compared to other widely used probabilistic metrics.  These include the expected calibration error (ECE \citep{guo2017calibration}), which requires binning and thus cannot be defined at a per-sample level; the negative log-likelihood, which is less interpretable and unbounded; the ranked probability score (RPS), which does not treat class distances equally; and the continuous ranked probability score (CRPS \citep{hersbach2000decomposition}) or $f-$divergences like the Hellinger distance, which can be overly conservative by aiming to control the (potentially long) tail of the distribution, and require access to the full (ground truth) distribution.  While amenable to our overall risk control framework, such probabilistic metrics, or any derived \emph{top-k} uncertainty measures, seem either less practical or less principled. 

\paragraph{Brier loss and Brier score.}  For convenience (but easily transferable), consider the supervised setting where we target risk control of the \emph{performance gap risk} (\autoref{eq:risk-gap}) for predictive distributions, which we denote in short as $\mathcal{R}^{G}(\hat{p})$.  This requires computing the \emph{Brier loss} $\ell_{B}(\hat{p}_l(\rvy | \vx), \vy)$, which we define for the multi-class setting in \autoref{eq:brier}.  Now consider a dataset $\D \sim \p$ of size $N$ (\eg, the calibration set $\D_{cal}$).  The associated Brier score \citep{brier1950verification} denoted $\texttt{Brier}(\hat{p}_l)$ for the $l$-th exit layer is then defined as the mean Brier loss across samples, \ie, we have
\begin{equation*}
    \texttt{Brier}(\hat{p}_l) = \frac{1}{N} \sum_{n=1}^{N} \ell_{B,n}(\hat{p}_l) = \frac{1}{N} \sum_{n=1}^{N} \ell_{B}(\hat{p}_l(\rvy | \vx_n), \vy_n) \stackrel{\text{(\autoref{eq:brier})}}{=} \frac{1}{N} \sum_{n=1}^{N} \sum_{k=1}^K \big(\hat{p}_l(k | \vx_n) - \mathbbm{1}[\vy_n = k]\big)^2,
\end{equation*}
where $\ell_{B,n}(\hat{p}_l)$ is an abbreviation for the $n$-th sample's Brier loss.  The risk $\mathcal{R}^{G}(\hat{p})$ that we aim to control is approximated by its empirical equivalent $\hat{\R}^{G}(\hat{p})$ on $\D$, and can then be interpreted as the difference in Brier scores between our EENN with threshold $\lambda$ and the full (last-layer exit) model:
\begin{align*}
    \hat{\R}^{G}(\hat{p}) = \frac{1}{N}\sum_{n=1}^{N} \big[\ell_{B}(\hat{p}_{\lambda}(\rvy | \vx_n), \vy_n) - \ell_{B}(\hat{p}_L(\rvy | \vx_n), \vy_n) \big] \\= \frac{1}{N}\sum_{n=1}^{N} \ell_{B,n}(\hat{p}_{\lambda}) - \frac{1}{N}\sum_{n=1}^{N} \ell_{B,n}(\hat{p}_L) = \texttt{Brier}(\hat{p}_{\lambda}) - \texttt{Brier}(\hat{p}_L).
\end{align*}

\paragraph{Mean-pixel Brier score.}  Since our semantic segmentation experiment (\autoref{subsec:exp-segment}) is a pixel-level classification task, we obtain \emph{pixel-level} predictive distributions, and thus compute per-pixel Brier losses.  For an image of height $H$ and width $W$, we can denote by $\ell_{B,n}(\hat{p}_{l,i,j})$ the $n$-th sample's Brier loss at the pixel location $(i, j) \in H \times W$.  Since we target \emph{image-level} early-exiting, these per-pixel Brier losses are averaged across pixels to compute a per-image Brier loss, which in turn is averaged across samples to obtain the $l$-th exit layer's  Brier score.  That is, the layer's associated Brier score $\texttt{Brier}(\hat{p}_l)$ is defined as 
\begin{align*}
    \texttt{Brier}(\hat{p}_l) = \frac{1}{NHW} \sum_{n=1}^{N} \sum_{(i,j) \in H \times W} \ell_{B,n}(\hat{p}_{l,i,j}) = \frac{1}{NHW} \sum_{n=1}^{N} \sum_{(i,j) \in H \times W} \ell_{B}(\hat{p}_l(\rvy | \vx_{n, i, j}), \vy_{n, i, j}), 
\end{align*}
and can be interpreted as the \emph{mean-pixel} Brier score of the particular exit layer.  Note that if we interpret every image pixel as an individual sample (\ie, define $\tilde{N} := NHW$), the Brier score formulation as a sample-averaged Brier loss continues to hold.  Similar to above, the targeted risk is then interpreted as the \emph{mean-pixel} Brier score difference. 

\newpage

\section{Algorithmic Details}
\label{app:algos}

Here, we sketch the algorithm for computing the risk-controlling threshold $\hat{\lambda}_{\textrm{UCB}}$ (\autoref{eq:lam-ucb}) via the \emph{Upper Confidence Bound} \citep{bates2021distribution} (UCB, \autoref{prop:ucb}) with the Waudby-Smith-Ramdas bound \citep{waudby2024estimating} (WSR, \autoref{prop:wsr-ext}).  The algorithm is an adaptation of the approach presented in \citet{bates2021distribution} to the early-exit setting. Note that our practical code implementation differs slightly from the pseudo-code presented here, as we omit here some code optimization steps such as vectorization to improve readability.


\begin{algorithm}[H]
\caption{Risk control for EENNs via UCB (\autoref{prop:ucb}) \label{algo:ucb}}
    \DontPrintSemicolon
    \SetAlgoLined
    \SetNoFillComment
    \SetKwInOut{Input}{input}
    \SetKwInOut{Output}{output}
    \Input{ \texttt{EENN} $\hat{p}_{\lambda}$, $\mathcal{D}_{cal}$, $\epsilon$, $\delta$, $\texttt{loss function } \ell$, $\texttt{grid step } \Delta$}
    \Output{ $\hat{\lambda}_{\textrm{UCB}}$}
    \BlankLine
    
    \texttt{grid = np.arange(1,0,-$\Delta$) } 
    
    \textcolor{mplgreen}{\texttt{\# Construct the UCB (\autoref{eq:ucb})}}  
    $\texttt{UCB = np.ones(len(grid))}$ 
    
    \For{ \texttt{i,$\lambda$ in grid}}{
        \texttt{L = $  \ell(\hat{p}_{\lambda},\D_{cal}) - \ell(\hat{p}_{L},\D_{cal}), 0 $  }  \quad \texttt{\textcolor{mplgreen}{\# \texttt{(n,)}}}
         
        \texttt{UCB[i] = WSR(L, $\delta$)} \quad \texttt{\textcolor{mplgreen}{\# \autoref{algo:wsr}}}
             
     }
     \BlankLine
     \textcolor{mplgreen}{\texttt{\# Find $\hat{\lambda}_{\textrm{UCB}}$ (\autoref{eq:lam-ucb})}}  
     
     $\texttt{rcp = -1}$ 
     
     \For{$\texttt{i, ucb in } \texttt{enumerate(UCB[1:])}$}{
        \If{\texttt{ucb  >= $ \epsilon$} }{
           \texttt{rcp = i } 
           \texttt{break} 
        }
     }
     \BlankLine
     \Return{\texttt{grid[rcp]}}
\end{algorithm}

\begin{algorithm}[H]
\caption{WSR bound (\autoref{prop:wsr-ext}) for UCB \\ (see also Section 3.1.3 in \citet{bates2021distribution})  \label{algo:wsr}}
    \DontPrintSemicolon
    \SetAlgoLined
    \SetNoFillComment
    \SetKwInOut{Input}{input}
    \SetKwInOut{Output}{output}
    \Input{ \texttt{losses L, $\delta$, grid step $\Delta$, \\ $\nu$ bound B (default  B = 1)} }
    \Output{ \texttt{UCB} $\hat{\mathcal{R}}^{+}(\lambda)$}
    \BlankLine
    \texttt{n = len(L)}
    \BlankLine

    \texttt{\textcolor{mplgreen}{\# init arrays}}
    
    \texttt{$\mu, \sigma^2, \nu, \kappa$ = np.ones(n), \dots}
    
    \For{\texttt{i in range(n)}}{
        $\mu[i] = (1/2 + \sum_{j=0}^i \texttt{L}[j] ) / (i + 1)$
    
        $\sigma^2[i] = (1/4 + \sum_{j=0}^i (\texttt{L}[j] - \mu [j])^2) / (i + 1)$
    
        $\nu [i] = \min \{ 1 / \texttt{B}, \sqrt{\frac{2 \log(1 / \delta)}{n \sigma^2 [i-1]}} \}$ 
    
        \texttt{\textcolor{mplgreen}{\# define $\kappa$ function}}
    
        $\kappa[i] = \texttt{lambda } \epsilon: \prod_{i=0}^j \{1 - \nu[j](\texttt{L}[j] - \epsilon) \}$    
    }
    \BlankLine
    \texttt{grid = np.arange(0,1,$\Delta$) } 

    \For{\texttt{$\epsilon$ in grid}}{
       \If{$\max_{i} \kappa[i](\epsilon) > 1 / \delta$}{
        \Return{$\epsilon$}
       }
    }       
\end{algorithm}

\newpage

\section{Implementation Details}
\label{app:implement}

Since our approach is \emph{post-hoc}, the required compute resources needed are minimal.  The primary requirement is sufficient memory to process the required calibration and test data, which can be quite large (\eg, images of size $2048 \times 1024$ pixels for Cityscapes). All our experiments can be performed and replicated on a single A100 GPU with experiment runtimes of $<$1 day.  We detail further case-specific implementations below for each experiment.

\subsection{Image Classification} 
\label{app:implement-classif}

\paragraph{Model details.}  The models we consider are: the multi-scale dense network (\textbf{MSDNet}; \cite{huang2017multi}), an adaptation of traditional convolutional NNs for the early-exit setting; the dynamic vision transformer (\textbf{DViT}; \cite{wang2021not}), which comprises multiple transformers with an increasing number of input patches; an enhanced MSDNet model that weights easy and hard examples differently during training (\textbf{L2W-DEN}; \cite{han2022learning}); and a recently proposed dynamic perceiver (\textbf{Dyn-Perc}; \cite{han2023dynamic}), which decouples feature extraction and early prediction tasks via a novel dual-branch architecture. For all models, we either work with the publicly available pretrained checkpoints or train the models ourselves, closely following the original implementation details.

\subsection{Semantic Segmentation}
\label{app:implement-segm}

\paragraph{Model details.}  We consider the EENN proposed by \citet{liu2021anytime} (ADP-C), which adds three intermediate exit heads to the HRNet segmentation model \cite{wang2020hrnet} (for a total of four exits) and is trained end-to-end on Cityscapes \citep{cordts2016cityscapes}. 
The model comes in two sizes, small (W18) and large (W48).  We focus on the larger model (ADP-C-W48), but find results hold equivalently for the smaller one (in fact, larger gains can be obtained).  Across experiments we employ the publicly available model checkpoints from the original implementation\footnote{See \url{https://github.com/liuzhuang13/anytime}}.

\paragraph{Model finetuning.}  Since the model is trained on Cityscapes, we consider additional finetuning to evaluate ADP-C on GTA5 \citep{richter2016gta}.  We take the available pre-trained model checkpoint (APD-C-W48) and finetune the model for 50 epochs on the GTA5 training set ($\sim 12 000$ images).  For this purpose, we employ the original training script and training parameters (\eg, learning rate, batch size, \etc).  However, we find that our finetuned model does not perform on par with the original, \ie, performance on GTA5 is substantially inferior to that on Cityscapes.  In particular, the performance improvement across subsequent exits on GTA5 is marginal, resulting in an EENN that is less suitable for early-exiting (see also \autoref{app:exp-segm}).  Yet, we find that risk control frameworks still apply, highlighting their robust model-agnostic properties even in light of an inferior underlying predictor.

\paragraph{Image-level aggregation.}  ADP-C provides an exiting mechanism following \autoref{eq:eenn} on \emph{pixel-level}, which is less useful for down-stream applications and decision making.  For details on the exact mechanism, we refer to \citet{liu2021anytime}.  Rather than exiting only for selected image pixels, we instead want to early-exit the entire image whilst ensuring risk control.  Thus, alongside different per-pixel confidence scores $\hat{\vc}_{l,i,j}, \, (i,j) \in H \times W$, we also consider confidence aggregations $\phi(\cdot)$ which produce a single image-level confidence measure $\hat{\vc}_{l}$ to perform \emph{image-level} risk control. 
Note that our prediction losses \emph{mean intersection-over-union} and \emph{miscoverage} already aggregate from pixel- to image-level, whereas our distributional loss (\autoref{eq:brier}) is adapted to additionally average over pixels, resulting in a \emph{mean-pixel} Brier score interpretation (see \autoref{app:math-brier}).

\paragraph{Risk control evaluation.}  We evaluate the original ADP-C-W48 on Cityscapes validation data, with a split of 80\% $\D_{cal}$ (\ie, 400 images) and 20\% $\D_{test}$ (\ie, 100 images). Similarly, we randomly select a subset of 1000 images from the GTA5 validation set and evaluate our finetuned model using 80\% $\D_{cal}$ (\ie, 800 images) and 20\% $\D_{test}$ (\ie, 200 images).  In both cases, we average risk control results over 100 trials of random calibration and test splits.

\subsection{Language Modeling}
\label{app:implement-calm}

\paragraph{Model details.}  For our language modeling experiments, we employ the early-exit pretrained model based on \texttt{T5-large} (770M parameters) from \citet{bae2023fast}\footnote{See \url{https://github.com/raymin0223/fast_robust_early_exit}}.  While this model closely follows the implementations in \citet{schuster2022confident}, we found it easier to work with than the original code\footnote{See \url{https://github.com/google-research/t5x/tree/main/t5x/contrib/calm}}.  Note that \citet{schuster2022confident} report results for \texttt{T5-small} (60M parameters) and \texttt{T5-base} (220M parameters), whereas we use the larger \texttt{T5-large}.  For risk control evaluation, we follow their exact exiting mechanism.  Specifically, we compute softmax-based confidences at every exit and deploy their threshold decay mechanism, where early-exiting is more conservative for initial tokens and becomes progressively more permissive (\cite{schuster2022confident}, Eq. 5).


\subsection{Image Generation with Early-Exit Diffusion}
\label{app:implement-diff}

\paragraph{Model details.}  For our image generation experiment, we re-implement the early-exit diffusion model proposed by \citet{tang2023deediff} (DeeDiff), since the original code is not publicly available.  We model our training procedure as closely as possible to the original.  As suggested in the paper, we use the U-ViT transformer \cite{bao2023worth} as a backbone denoising network.  Early-exiting in DeeDiff is performed on the denoising network at each sampling step. 
Specifically, for every sampling step $t$ and exit layer $e=1, \dots ,L$, a per-pixel confidence map $\bm{c}_{e,t}$ is obtained.  Then, $\bm{c}_{e,t}$ is used to compute the global (scalar) confidence score $c_{e,t}$ by averaging the confidence across all pixels.  If the scalar confidence score satisfies the exit condition $c_{e,t} \geq \lambda$, we proceed to the next denoising step $t+1$, employing the output (\ie, the predicted noise) of the $e$-th exit layer at the $t$-th sampling step.  The model is trained using a standard diffusion denoising loss \cite{ho2020denoising} and two uncertainty-aware losses, closely following the approach described in \citet{tang2023deediff}.  

\paragraph{LPIPS metric.}  Our task-specific prediction loss measures the \emph{perceptual difference} between early-exit and full model image generations.  For this, we employ the LPIPS metric \cite{zhang2018unreasonable}, which computes the similarity between activations of image patches for a selected pre-trained network.  LPIPS values are in the range of $[0,1]$, with smaller values indicating perceptually more similar images.

\section{Further Experimental Results}
\label{app:results}

\subsection{Image Classification}
\label{app:img_cls}

Additional test risk and efficiency curves for calibration set size $n=1000$ on ImageNet are displayed in \autoref{fig:imagenet_1000}, while tables with efficiency gain values on ImageNet are given in \autoref{tab:img_class}.

\begin{figure}[htbp]
  \centering
  \includegraphics[width=0.85\textwidth]{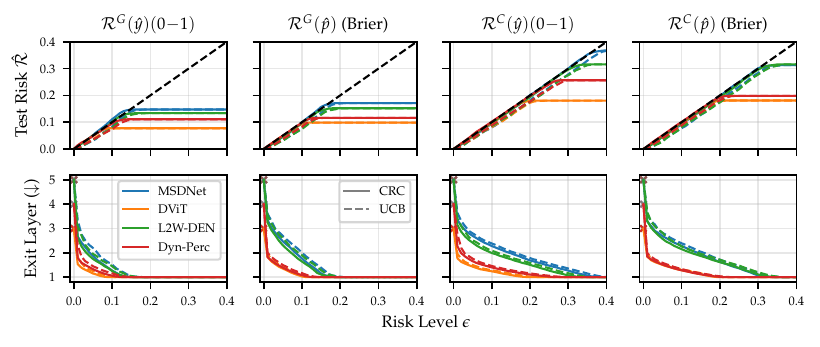}
  \caption{Empirical test risk (\emph{top}) and efficiency gains (\emph{bottom}) for different early-exit models, risks (\autoref{subsec:methods-risks}) and risk levels $\epsilon$ on ImageNet (for calibration set size $n = 1000$). In line with theoretical results, the test risk is controlled across models, risk types, and levels. Despite guaranteeing control \emph{in expectation} (CRC, \autoref{prop:crc}) or \emph{with high probability} (UCB, \autoref{prop:ucb}), obtained gains are substantial.}
  \label{fig:imagenet_1000}
\end{figure}

\begin{table}[t]
\centering
\caption{Efficiency gains for various EENNs on ImageNet, for risk control via CRC (\autoref{prop:crc}) or UCB (\autoref{prop:ucb}) and calibration set size $n \in \{100, 1000\}$.  Displayed values denote relative improvement over last-layer exiting in terms of mean exit layer (in \%).  The test risk is successfully controlled in all cases.  Results focus on small risk levels $\epsilon \in \{0.01, 0.05\}$, which are of higher practical interest.}
\label{tab:img_class}

\subcaptionbox{UCB and $n=100$ \label{tab:img_class_ucb_100}}{
    \setlength{\tabcolsep}{8pt}
    \begin{tabular}{lcccccccc}
    \toprule
    \textbf{Risk} & \multicolumn{2}{c}{$\mathcal{R}^G (\hat{y}) (0\!-\!1)$} & \multicolumn{2}{c}{$\mathcal{R}^G (\hat{p})$ (Brier)} & \multicolumn{2}{c}{$\mathcal{R}^C (\hat{y}) (0\!-\!1)$} & \multicolumn{2}{c}{$\mathcal{R}^C (\hat{p})$ (Brier)} \\
    \cmidrule(lr){2-3} \cmidrule(lr){4-5} \cmidrule(lr){6-7} \cmidrule(lr){8-9}
    Level $\bm{\epsilon}$  & \textbf{0.01} & \textbf{0.05} & \textbf{0.01} & \textbf{0.05} & \textbf{0.01} & \textbf{0.05} & \textbf{0.01} & \textbf{0.05} \\
    \midrule
    MSDNet     & 0.0 & 42.51 & 0.0 & 30.62 & 0.0 & 34.21 & 0.0 & 30.47 \\
    DViT       & 0.0 & 46.5 & 0.0 & 35.4 & 0.0 & 45.14 & 0.0 & 39.74 \\
    L2W-DEN    & 0.0 & 49.47 & 0.0 & 33.09 & 0.0 & 41.77 & 0.0 & 35.77 \\
    Dyn-Perc   & 0.0 & 45.42 & 0.0 & 35.67 & 0.0 & 31.71 & 0.0 & 34.85 \\
    \bottomrule
    \end{tabular}
}

\subcaptionbox{CRC and $n=100$ \label{tab:img_class_crc_100}}{
    \setlength{\tabcolsep}{8pt}
    \begin{tabular}{lcccccccc}
    \toprule
    \textbf{Risk} & \multicolumn{2}{c}{$\mathcal{R}^G (\hat{y}) (0\!-\!1)$} & \multicolumn{2}{c}{$\mathcal{R}^G (\hat{p})$ (Brier)} & \multicolumn{2}{c}{$\mathcal{R}^C (\hat{y}) (0\!-\!1)$} & \multicolumn{2}{c}{$\mathcal{R}^C (\hat{p})$ (Brier)} \\
    \cmidrule(lr){2-3} \cmidrule(lr){4-5} \cmidrule(lr){6-7} \cmidrule(lr){8-9}
    Level $\bm{\epsilon}$  & \textbf{0.01} & \textbf{0.05} & \textbf{0.01} & \textbf{0.05} & \textbf{0.01} & \textbf{0.05} & \textbf{0.01} & \textbf{0.05} \\
    \midrule
    MSDNet     & 41.64 & 58.6 & 10.99 & 43.58 & 30.71 & 43.68 & 4.46 & 40.54 \\
    DViT       & 49.32 & 58.6 & 5.02 & 51.09 & 40.36 & 51.83 & 4.58 & 46.89 \\
    L2W-DEN    & 50.82 & 64.38 & 8.09 & 50.4 & 35.09 & 50.57 & 0.0 & 44.66 \\
    Dyn-Perc   & 44.39 & 63.9 & 36.86 & 59.42 & 30.09 & 57.99 & 31.19 & 58.05 \\
    \bottomrule
    \end{tabular}
}

\subcaptionbox{UCB and $n=1000$ \label{tab:img_class_ucb_1000}}{
    \setlength{\tabcolsep}{8pt}
    \begin{tabular}{lcccccccc}
    \toprule
    \textbf{Risk} & \multicolumn{2}{c}{$\mathcal{R}^G (\hat{\vy}) (0\!-\!1)$} & \multicolumn{2}{c}{$\mathcal{R}^G (\hat{\vp})$ (Brier)} & \multicolumn{2}{c}{$\mathcal{R}^C (\hat{\vy}) (0\!-\!1)$} & \multicolumn{2}{c}{$\mathcal{R}^C(\hat{\vp})$ (Brier)} \\
    \cmidrule(lr){2-3} \cmidrule(lr){4-5} \cmidrule(lr){6-7} \cmidrule(lr){8-9}
    Level $\bm{\epsilon}$  & \textbf{0.01} & \textbf{0.05} & \textbf{0.01} & \textbf{0.05} & \textbf{0.01} & \textbf{0.05} & \textbf{0.01} & \textbf{0.05} \\
    \midrule
    MSDNet     & 33.23 & 56.0 & 27.94 & 46.11 & 27.05 & 44.0 & 28.05 & 43.65 \\
    DViT       & 38.24 & 58.12 & 34.93 & 52.31 & 34.61 & 50.02 & 34.5 & 48.31 \\
    L2W-DEN    & 38.11 & 60.87 & 28.96 & 47.97 & 32.04 & 49.71 & 26.46 & 43.6 \\
    Dyn-Perc   & 42.24 & 63.77 & 49.9 & 62.51 & 30.48 & 56.01 & 50.34 & 60.28 \\
    \bottomrule
    \end{tabular}
}

\subcaptionbox{CRC and $n=1000$ \label{tab:img_class_crc_1000}}{
    \setlength{\tabcolsep}{8pt}
    \begin{tabular}{lcccccccc}
    \toprule
    \textbf{Risk} & \multicolumn{2}{c}{$\mathcal{R}^G (\hat{y}) (0\!-\!1)$} & \multicolumn{2}{c}{$\mathcal{R}^G (\hat{p})$ (Brier)} & \multicolumn{2}{c}{$\mathcal{R}^C (\hat{y}) (0\!-\!1)$} & \multicolumn{2}{c}{$\mathcal{R}^C (\hat{p})$ (Brier)} \\
    \cmidrule(lr){2-3} \cmidrule(lr){4-5} \cmidrule(lr){6-7} \cmidrule(lr){8-9}
    Level $\bm{\epsilon}$  & \textbf{0.01} & \textbf{0.05} & \textbf{0.01} & \textbf{0.05} & \textbf{0.01} & \textbf{0.05} & \textbf{0.01} & \textbf{0.05} \\
    \midrule
    MSDNet     & 46.56 & 62.55 & 33.11 & 50.56 & 33.01 & 46.12 & 34.09 & 46.26 \\
    DViT       & 48.68 & 61.6 & 39.05 & 54.89 & 40.75 & 52.21 & 38.48 & 49.86 \\
    L2W-DEN    & 47.73 & 65.04 & 35.09 & 53.81 & 37.08 & 51.7 & 33.76 & 48.48 \\
    Dyn-Perc   & 55.16 & 66.73 & 54.28 & 65.25 & 43.35 & 58.62 & 53.67 & 62.04 \\
    \bottomrule
    \end{tabular}
}

\end{table}

\clearpage

\subsection{Semantic Segmentation}
\label{app:exp-segm}

We report full tables with efficiency gains across risks and confidence measure combinations for Cityscapes (\autoref{tab:app-segm-city}) and GTA5 (\autoref{tab:app-segm-gta}) in terms of mean exit layer and GFLOPS.  In addition, we display test risk and efficiency curves on both Cityscapes (\autoref{fig:app-segm-rcp-city}) and GTA5 (\autoref{fig:app-segm-rcp-gta}) for risk control via CRC (\autoref{prop:crc}) and UCB (\autoref{prop:ucb}).  The figures are for the simplest combination of \textbf{Top-1} pixel-level confidence and \textbf{mean} image-level aggregation.  Note that the figures for other confidence combinations are similar and thus omitted, with the test risk being controlled in all cases. 

\paragraph{GTA5 results.} We observe that for GTA5 the \emph{performance gap risk} for prediction control $\mathcal{R}^G (\hat{\vy})$ (mIoU) seems particularly easy to control, with high gains reached even for very strict $\epsilon=0.01$.  This relates to the underlying predictor's generally inferior performance due to finetuning (see \autoref{app:implement-segm}).  The obtained model records lower performance and marginal improvements across exit layers, resulting in a small risk that is easy to control.  Intuitively, the price paid by exiting early is marginal, since the early-exit layer performs almost on par with the final layer.  Thus, the scale of the risk deviates from that of other risks, and more meaningful risk control should consider both improving the underlying predictor, and selecting a different scale of risk levels $\epsilon$.

\begin{figure}[h]
  \centering
  \begin{subfigure}[t]{0.495\textwidth}
    \includegraphics[width=\textwidth]{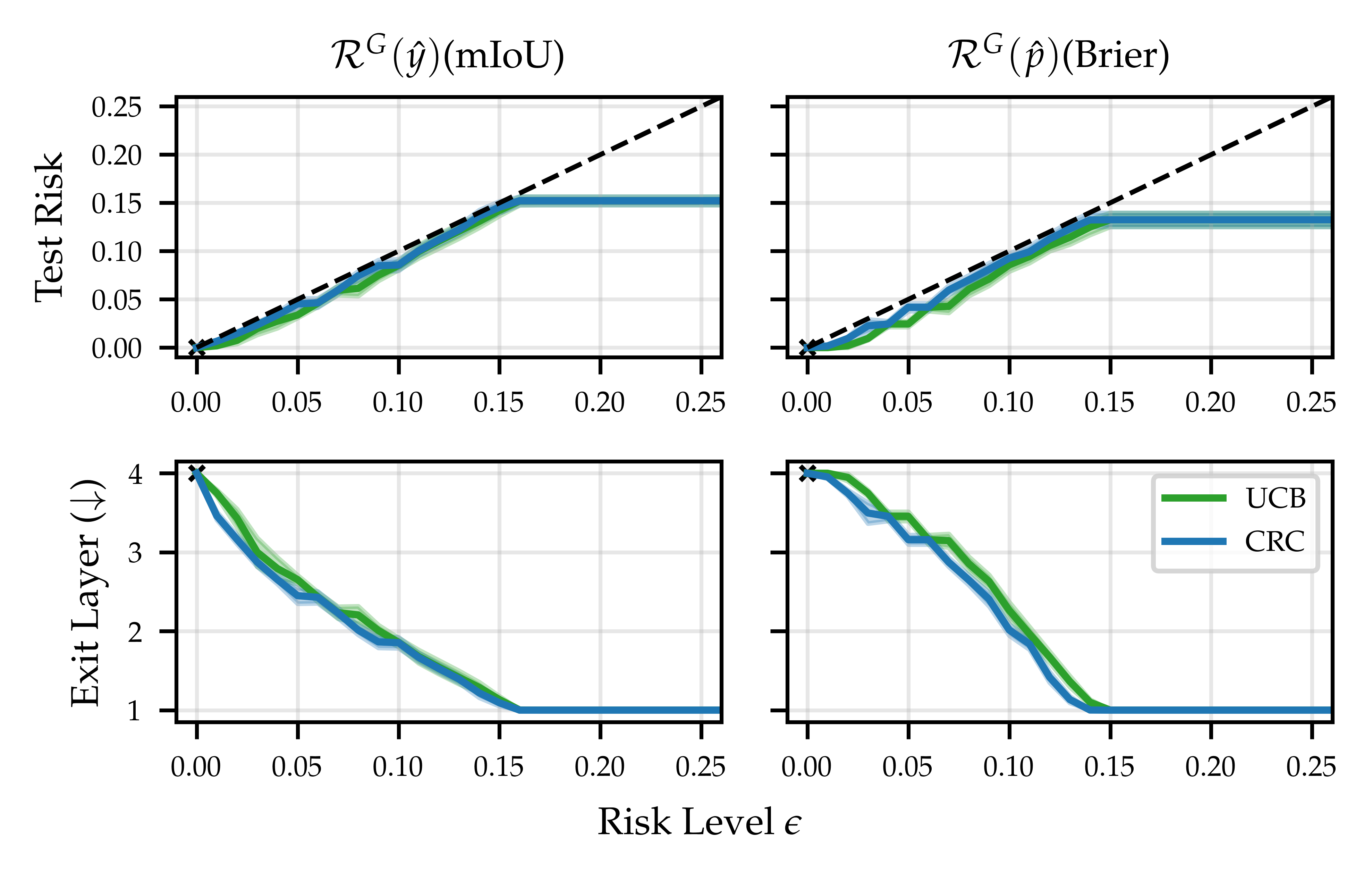}
  \end{subfigure}
  \hfill
  \begin{subfigure}[t]{0.495\textwidth}
    \includegraphics[width=\textwidth]{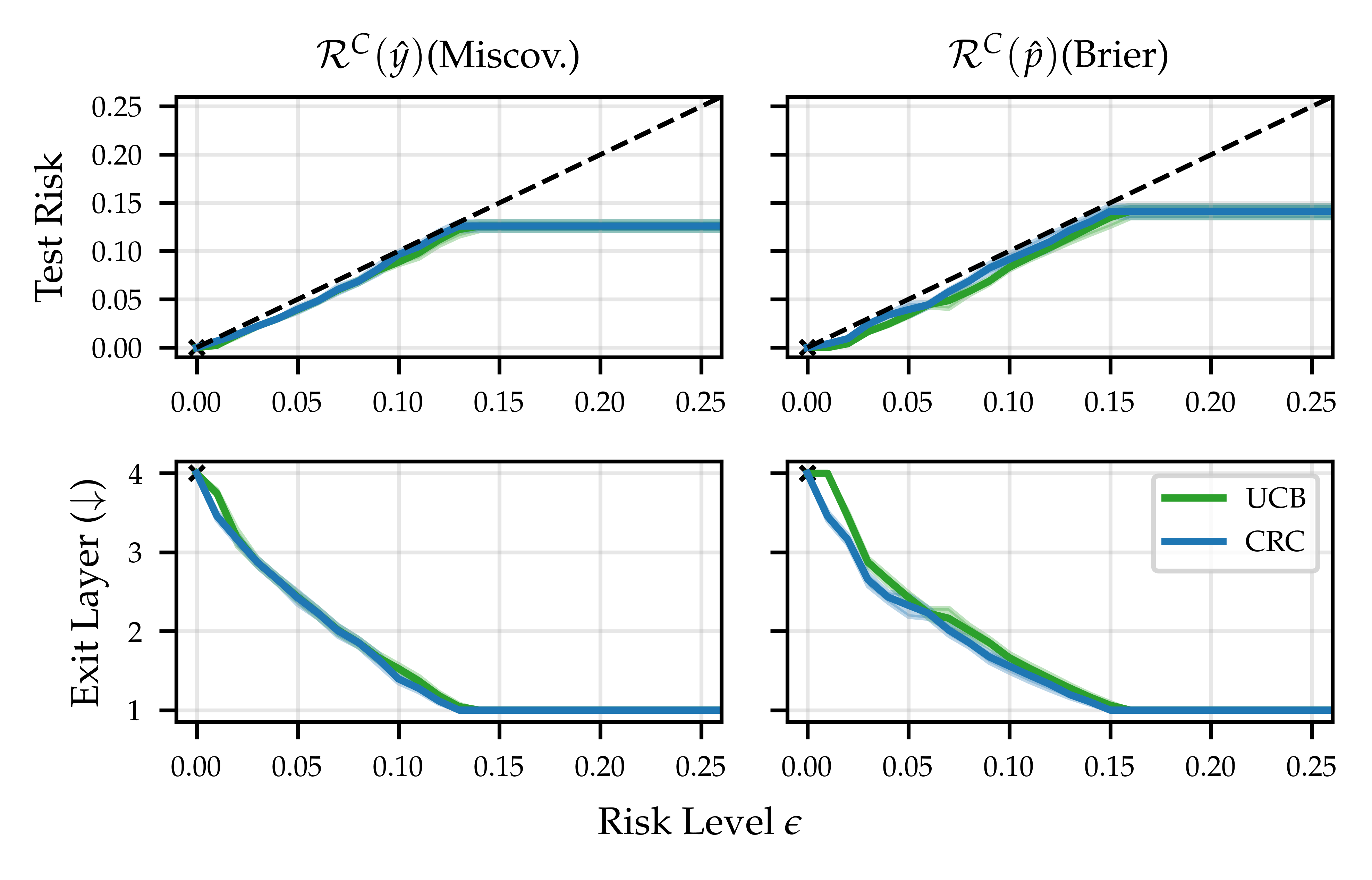}
  \end{subfigure}
  \caption{Empirical test risk (\emph{top}) and efficiency gains (\emph{bottom}) for different risks (\autoref{subsec:methods-risks}) and risk levels $\epsilon$ on \textbf{Cityscapes}. In line with theoretical results, the test risk is controlled across risk types and levels. Despite guaranteeing control \emph{in expectation} (CRC, \autoref{prop:crc}) or \emph{with high probability} (UCB, \autoref{prop:ucb}), obtained gains are meaningful. For both figures, we consider the simplest combination of Top-1 confidence score and mean image-level aggregation.}
  \label{fig:app-segm-rcp-city}
\end{figure}

\begin{figure}[h]
  \centering
  \begin{subfigure}[t]{0.495\textwidth}
    \includegraphics[width=\textwidth]{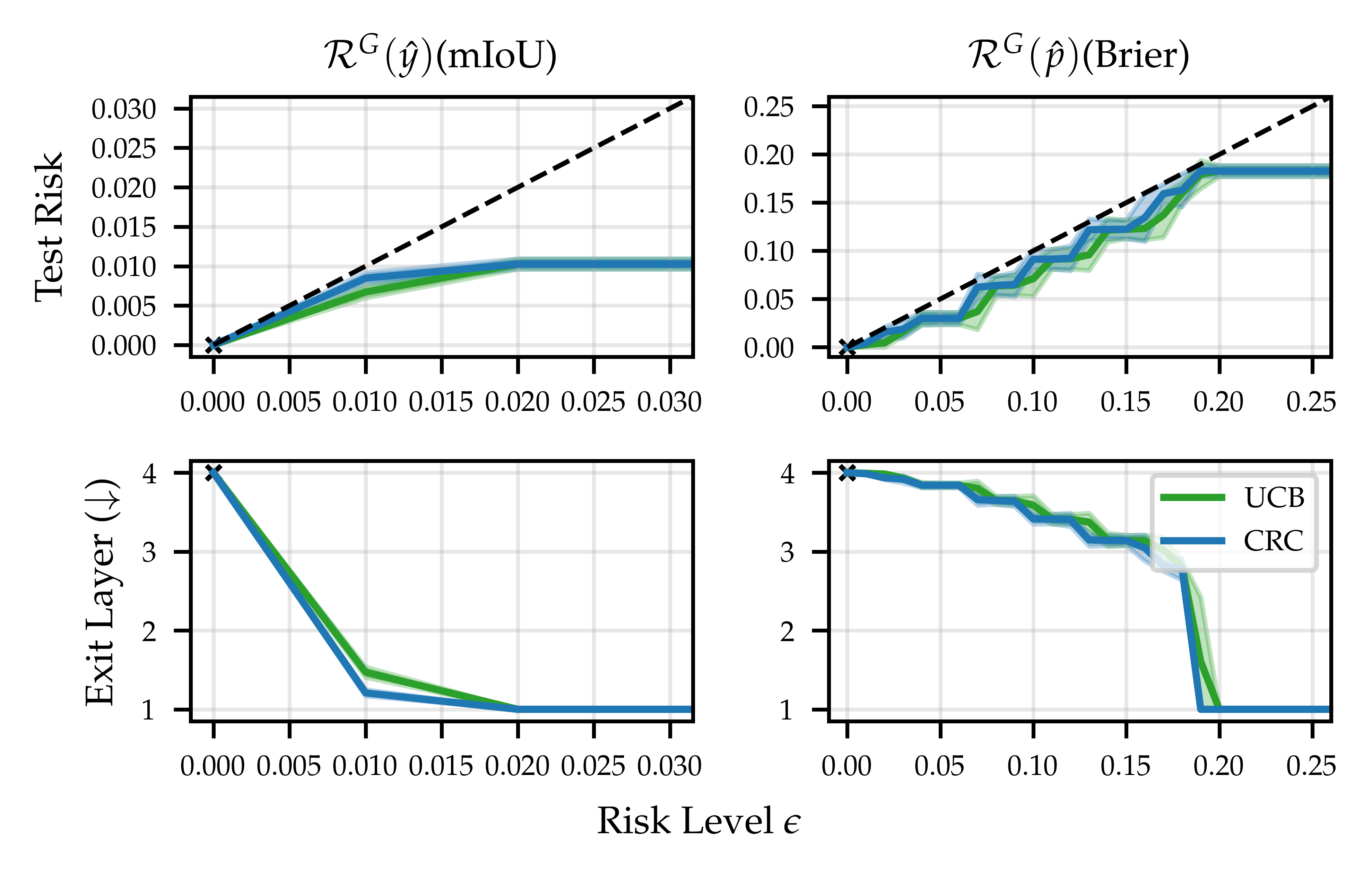}
  \end{subfigure}
  \hfill
  \begin{subfigure}[t]{0.495\textwidth}
    \includegraphics[width=\textwidth]{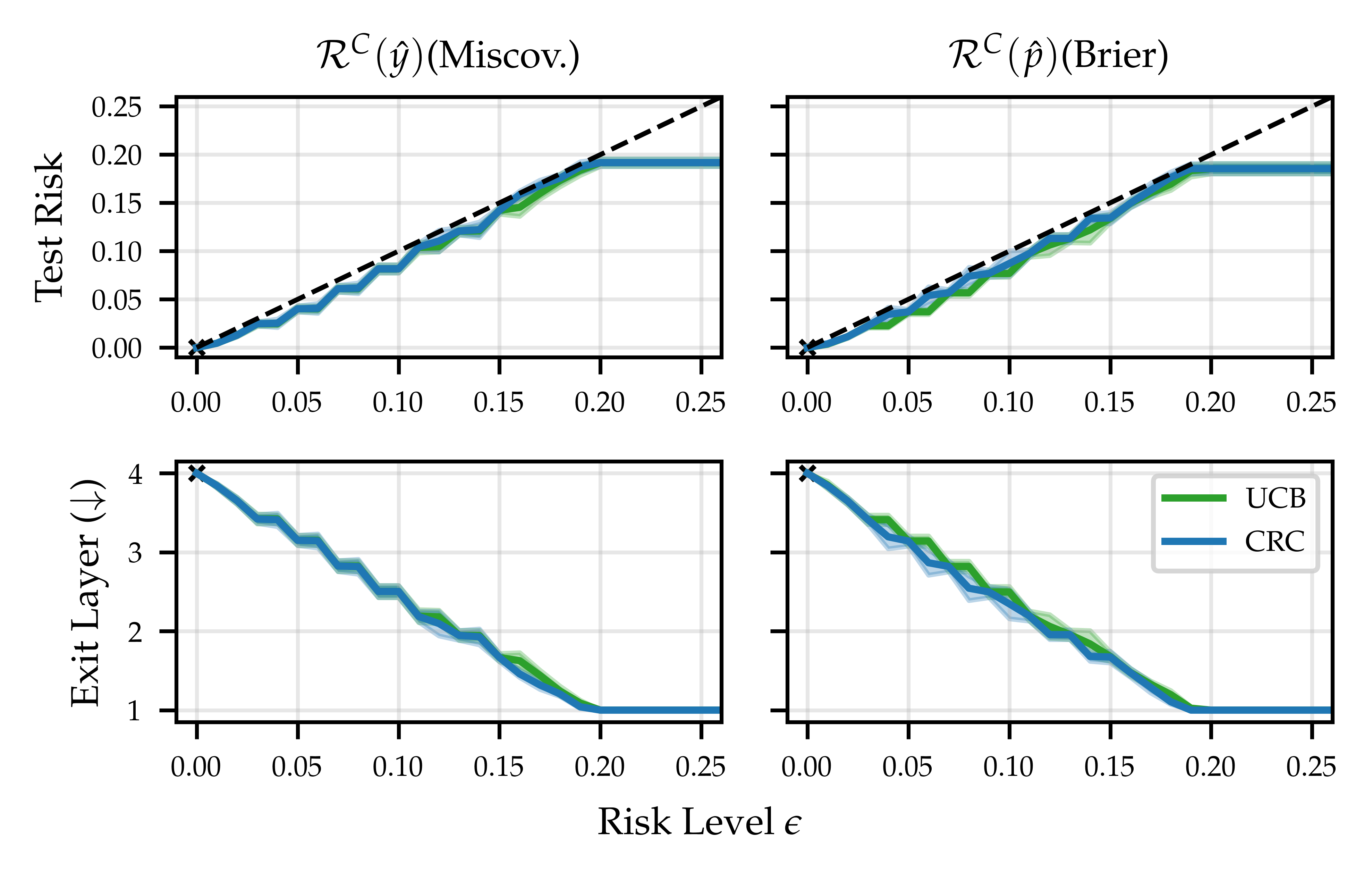}
  \end{subfigure}
  \caption{Empirical test risk (\emph{top}) and efficiency gains (\emph{bottom}) for different risks (\autoref{subsec:methods-risks}) and risk levels $\epsilon$ on \textbf{GTA5}. In line with theoretical results, the test risk is controlled across risk types and levels. Despite guaranteeing control \emph{in expectation} (CRC, \autoref{prop:crc}) or \emph{with high probability} (UCB, \autoref{prop:ucb}), obtained gains are meaningful. For both figures, we consider the simplest combination of Top-1 confidence score and mean image-level aggregation.}
  \label{fig:app-segm-rcp-gta}
\end{figure}

\begin{table}[ht]
\centering
\caption{Efficiency gains for semantic segmentation with risk control via UCB (\autoref{prop:ucb}) on \textbf{Cityscapes}.  We evaluate for different risks (\autoref{subsec:methods-risks}), confidence measures (\autoref{subsec:exp-segment}) and risk levels $\epsilon$.  Displayed values denote relative improvement over last-layer exiting (in \%) in terms of mean exit layer or floating point operations (GFLOPS).  The test risk is successfully controlled in all cases.}
\label{tab:app-segm-city}
\setlength{\tabcolsep}{7pt}

\subcaptionbox{Efficiency gains in terms of \textbf{mean exit layer} improvement.}{
    \resizebox{0.85\textwidth}{!}{\begin{tabular}{ll|cccccc|cccccc}
    \toprule
    \multicolumn{2}{c}{\textbf{Risk}} & \multicolumn{3}{c}{$\mathcal{R}^G (\hat{\vy})$ (mIoU)} & \multicolumn{3}{c}{$\mathcal{R}^G (\hat{p})$ (Brier)} & \multicolumn{3}{c}{$\mathcal{R}^C (\hat{\vy})$ (Miscov.)} & \multicolumn{3}{c}{$\mathcal{R}^C(\hat{p})$ (Brier)} \\
    \cmidrule(lr){3-5} \cmidrule(lr){6-8} \cmidrule(lr){9-11} \cmidrule(lr){12-14}
    \multicolumn{2}{c}{Level $\bm{\epsilon}$} & \textbf{0.01} & \textbf{0.05} & \textbf{0.1} & \textbf{0.01} & \textbf{0.05} & \textbf{0.1} & \textbf{0.01} & \textbf{0.05} & \textbf{0.1} & \textbf{0.01} & \textbf{0.05} & \textbf{0.1} \\
    \midrule
    \multirow{3}{*}{\begin{sideways} Mean \end{sideways}} & \textbf{Top-1} & 6.3 & 33.7 & 53.5 & 0.0 & 13.6 & 43.4 & 6.3 & 39.2 & 61.8 & 0.0 & 39.3 & 58.4 \\
    & \textbf{Top-Diff} & 9.3 & 35.5 & 54.4 & 0.0 & 17.5 & 44.3 & 6.3 & 39.9 & 62.4 & 0.0 & 38.6 & 57.9 \\ 
    & \textbf{Entropy} & 5.2 & 36.0 & 54.3 & 0.0 & 17.9 & 41.0 & 5.1 & 40.4 & 61.3 & 0.0 & 40.1 & 58.3 \\
    \midrule
    \multirow{3}{*}{\begin{sideways} Quant. \end{sideways}} & \textbf{Top-1} & 0.0 & 36.7 & 54.6 & 0.0 & 14.9 & 45.0 & 0.0 & 41.2 & 63.4 & 0.0 & 39.1 & 59.4 \\ 
    & \textbf{Top-Diff} & 0.1 & 37.1 & 55.2 & 0.0 & 17.2 & 45.2 & 0.0 & 41.2 & 63.7 & 0.0 & 40.4 & 59.6 \\
    & \textbf{Entropy} & 6.1 & 37.0 & 54.0 & 0.0 & 17.9 & 44.7 & 6.1 & 41.0 & 63.1 & 0.0 & 39.1 & 58.7 \\  
    \midrule
    \multirow{3}{*}{\begin{sideways} Patch \end{sideways}} & \textbf{Top-1} & 10.0 & 35.7 & 53.3 & 0.0 & 18.4 & 45.3 & 8.8 & 39.1 & 61.5 & 0.0 & 38.0 & 58.3 \\  
    & \textbf{Top-Diff} & 10.0 & 35.2 & 53.4 & 0.0 & 19.4 & 45.9 & 8.8 & 40.5 & 62.2 & 0.0 & 38.4 & 58.8 \\ 
    & \textbf{Entropy} & 9.1 & 34.8 & 53.5 & 0.0 & 18.0 & 45.8 & 8.1 & 38.9 & 61.5 & 0.0 & 37.3 & 57.1 \\  
    \bottomrule
    \end{tabular}}
}

\subcaptionbox{Efficiency gains in terms of \textbf{GFLOPS} improvement.}{
    \resizebox{0.85\textwidth}{!}{\begin{tabular}{ll|cccccc|cccccc}
    \toprule
    \multicolumn{2}{c}{\textbf{Risk}} & \multicolumn{3}{c}{$\mathcal{R}^G (\hat{\vy})$ (mIoU)} & \multicolumn{3}{c}{$\mathcal{R}^G (\hat{p})$ (Brier)} & \multicolumn{3}{c}{$\mathcal{R}^C (\hat{\vy})$ (Miscov.)} & \multicolumn{3}{c}{$\mathcal{R}^C(\hat{p})$ (Brier)} \\
    \cmidrule(lr){3-5} \cmidrule(lr){6-8} \cmidrule(lr){9-11} \cmidrule(lr){12-14}
    \multicolumn{2}{c}{Level $\bm{\epsilon}$} & \textbf{0.01} & \textbf{0.05} & \textbf{0.1} & \textbf{0.01} & \textbf{0.05} & \textbf{0.1} & \textbf{0.01} & \textbf{0.05} & \textbf{0.1} & \textbf{0.01} & \textbf{0.05} & \textbf{0.1} \\
    \midrule
    \multirow{3}{*}{\begin{sideways} Mean \end{sideways}} & \textbf{Top-1} & 6.2 & 33.4 & 53.2 & 0.0 & 13.5 & 43.1 & 6.2 & 38.9 & 61.4 & 0.0 & 39.0 & 58.0 \\ 
    & \textbf{Top-Diff} & 9.2 & 35.3 & 54.0 & 0.0 & 17.4 & 44.0 & 6.3 & 39.6 & 62.0 & 0.0 & 38.3 & 57.5 \\ 
    & \textbf{Entropy} & 5.1 & 35.7 & 53.9 & 0.0 & 17.7 & 40.7 & 5.1 & 40.1 & 60.9 & 0.0 & 39.8 & 57.9 \\ 
    \midrule
    \multirow{3}{*}{\begin{sideways} Quant. \end{sideways}} & \textbf{Top-1} & 0.0 & 36.4 & 54.2 & 0.0 & 14.8 & 44.6 & 0.0 & 40.9 & 62.9 & 0.0 & 38.8 & 58.9 \\ 
    & \textbf{Top-Diff} & 0.1 & 36.8 & 54.8 & 0.0 & 17.1 & 44.8 & 0.0 & 40.9 & 63.3 & 0.0 & 40.1 & 59.1 \\ 
    & \textbf{Entropy} & 6.0 & 36.7 & 53.6 & 0.0 & 17.8 & 44.4 & 6.0 & 40.7 & 62.7 & 0.0 & 38.8 & 58.2 \\ 
    \midrule
    \multirow{3}{*}{\begin{sideways} Patch \end{sideways}} & \textbf{Top-1} & 9.9 & 35.4 & 52.9 & 0.0 & 18.2 & 44.9 & 8.7 & 38.8 & 61.0 & 0.0 & 37.7 & 57.8 \\ 
    & \textbf{Top-Diff} & 9.9 & 34.9 & 53.0 & 0.0 & 19.2 & 45.6 & 8.7 & 40.2 & 61.7 & 0.0 & 38.1 & 58.4 \\ 
    & \textbf{Entropy} & 9.1 & 34.6 & 53.1 & 0.0 & 17.8 & 45.5 & 8.0 & 38.6 & 61.1 & 0.0 & 37.0 & 56.7 \\ 
    \bottomrule
    \end{tabular}}
}

\end{table}

\begin{table}[ht]
\centering
\caption{Efficiency gains for semantic segmentation with risk control via UCB (\autoref{prop:ucb}) on \textbf{GTA5}.  We evaluate for different risks (\autoref{subsec:methods-risks}), confidence measures (\autoref{subsec:exp-segment}) and risk levels $\epsilon$.  Displayed values denote relative improvement over last-layer exiting (in \%) in terms of mean exit layer or floating point operations (GFLOPS).  The test risk is successfully controlled in all cases.}
\label{tab:app-segm-gta}
\setlength{\tabcolsep}{7pt}

\subcaptionbox{Efficiency gains in terms of \textbf{mean exit layer} improvement.}{
    \resizebox{0.85\textwidth}{!}{
    \begin{tabular}{ll|cccccc|cccccc}
    \toprule
    \multicolumn{2}{c}{\textbf{Risk}} & \multicolumn{3}{c}{$\mathcal{R}^G (\hat{\vy})$ (mIoU)} & \multicolumn{3}{c}{$\mathcal{R}^G (\hat{p})$ (Brier)} & \multicolumn{3}{c}{$\mathcal{R}^C (\hat{\vy})$ (Miscov.)} & \multicolumn{3}{c}{$\mathcal{R}^C(\hat{p})$ (Brier)} \\
    \cmidrule(lr){3-5} \cmidrule(lr){6-8} \cmidrule(lr){9-11} \cmidrule(lr){12-14}
    \multicolumn{2}{c}{Level $\bm{\epsilon}$} & \textbf{0.01} & \textbf{0.05} & \textbf{0.1} & \textbf{0.01} & \textbf{0.05} & \textbf{0.1} & \textbf{0.01} & \textbf{0.05} & \textbf{0.1} & \textbf{0.01} & \textbf{0.05} & \textbf{0.1} \\
    \midrule
    \multirow{3}{*}{\begin{sideways} Mean \end{sideways}} & \textbf{Top-1} & 63.3 & 75.0 & 75.0 & 0.2 & 4.0 & 10.3 & 3.9 & 21.2 & 37.4 & 3.7 & 21.5 & 37.6 \\ 
    & \textbf{Top-Diff} & 62.9 & 75.0 & 75.0 & 0.3 & 4.6 & 12.0 & 4.4 & 21.8 & 39.2 & 3.0 & 23.6 & 43.0 \\  
    & \textbf{Entropy} & 62.5 & 75.0 & 75.0 & 0.2 & 2.9 & 12.5 & 2.8 & 18.5 & 39.9 & 2.8 & 18.7 & 42.9 \\ 
    \midrule
    \multirow{3}{*}{\begin{sideways} Quant. \end{sideways}} & \textbf{Top-1} & 63.4 & 75.0 & 75.0 & 0.0 & 4.6 & 12.4 & 2.5 & 23.1 & 42.3 & 2.4 & 23.4 & 42.6 \\ 
    & \textbf{Top-Diff} & 63.8 & 75.0 & 75.0 & 0.0 & 4.5 & 12.6 & 4.2 & 22.8 & 42.1 & 3.0 & 24.1 & 43.4 \\ 
    & \textbf{Entropy} & 61.1 & 75.0 & 75.0 & 0.1 & 4.9 & 14.0 & 3.7 & 23.6 & 41.8 & 3.6 & 23.9 & 43.6 \\ 
    \midrule
    \multirow{3}{*}{\begin{sideways} Patch \end{sideways}} & \textbf{Top-1} & 60.1 & 75.0 & 75.0 & 2.9 & 18.3 & 35.6 & 3.9 & 18.9 & 35.5 & 2.9 & 18.3 & 35.6 \\ 
    & \textbf{Top-Diff} & 60.2 & 75.0 & 75.0 & 3.5 & 19.7 & 37.2 & 4.7 & 19.5 & 37.1 & 3.5 & 19.7 & 37.2 \\ 
    & \textbf{Entropy} & 58.9 & 75.0 & 75.0 & 2.2 & 19.0 & 36.7 & 3.8 & 19.0 & 36.5 & 2.2 & 19.0 & 36.7 \\ 
    \bottomrule
    \end{tabular}}
}

\subcaptionbox{Efficiency gains in terms of \textbf{GFLOPS} improvement.}{
    \resizebox{0.85\textwidth}{!}{
    \begin{tabular}{ll|cccccc|cccccc}
    \toprule
    \multicolumn{2}{c}{\textbf{Risk}} & \multicolumn{3}{c}{$\mathcal{R}^G (\hat{\vy})$ (mIoU)} & \multicolumn{3}{c}{$\mathcal{R}^G (\hat{p})$ (Brier)} & \multicolumn{3}{c}{$\mathcal{R}^C (\hat{\vy})$ (Miscov.)} & \multicolumn{3}{c}{$\mathcal{R}^C(\hat{p})$ (Brier)} \\
    \cmidrule(lr){3-5} \cmidrule(lr){6-8} \cmidrule(lr){9-11} \cmidrule(lr){12-14}
    \multicolumn{2}{c}{Level $\bm{\epsilon}$} & \textbf{0.01} & \textbf{0.05} & \textbf{0.1} & \textbf{0.01} & \textbf{0.05} & \textbf{0.1} & \textbf{0.01} & \textbf{0.05} & \textbf{0.1} & \textbf{0.01} & \textbf{0.05} & \textbf{0.1} \\
    \midrule
    \multirow{3}{*}{\begin{sideways} Mean \end{sideways}} & \textbf{Top-1} & 62.8 & 75.0 & 75.0 & 0.2 & 3.9 & 10.2 & 3.9 & 21.1 & 37.2 & 3.7 & 21.3 & 37.3 \\ 
    & \textbf{Top-Diff} & 62.5 & 75.0 & 75.0 & 0.3 & 4.6 & 11.9 & 4.4 & 21.6 & 38.9 & 3.0 & 23.5 & 42.7 \\ 
    & \textbf{Entropy} & 62.0 & 75.0 & 75.0 & 0.2 & 2.9 & 12.4 & 2.8 & 18.3 & 39.6 & 2.8 & 18.6 & 42.6 \\ 
    \midrule
    \multirow{3}{*}{\begin{sideways} Quant. \end{sideways}} & \textbf{Top-1} & 63.0 & 75.0 & 75.0 & 0.0 & 4.5 & 12.4 & 2.5 & 22.9 & 42.0 & 2.4 & 23.2 & 42.3 \\ 
    & \textbf{Top-Diff} & 63.4 & 75.0 & 75.0 & 0.0 & 4.5 & 12.5 & 4.2 & 22.7 & 41.7 & 3.0 & 24.0 & 43.1 \\  
    & \textbf{Entropy} & 60.6 & 75.0 & 75.0 & 0.1 & 4.9 & 13.9 & 3.7 & 23.4 & 41.5 & 3.5 & 23.7 & 43.2 \\ 
    \midrule
    \multirow{3}{*}{\begin{sideways} Patch \end{sideways}} & \textbf{Top-1} & 59.7 & 75.0 & 75.0 & 2.9 & 18.2 & 35.4 & 3.9 & 18.7 & 35.2 & 2.9 & 18.2 & 35.4 \\ 
    & \textbf{Top-Diff} & 59.8 & 75.0 & 75.0 & 3.4 & 19.5 & 36.9 & 4.7 & 19.4 & 36.8 & 3.4 & 19.5 & 36.9 \\ 
    & \textbf{Entropy} & 58.4 & 75.0 & 75.0 & 2.2 & 18.8 & 36.4 & 3.7 & 18.8 & 36.2 & 2.2 & 18.8 & 36.4 \\ 
    \bottomrule
    \end{tabular}}
}

\end{table}

\clearpage

\subsection{Language Modeling}
\label{app:exp_calm}

\begin{figure}[!h]
  \centering
  \includegraphics[width=\textwidth]{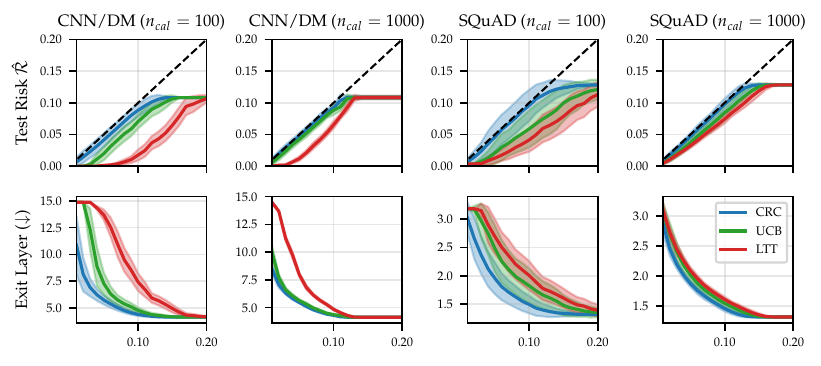}
  \caption{Empirical test risk (\emph{top}) and efficiency gains (\emph{bottom}) for the CALM model \cite{schuster2022confident} for text summarization on CNN/DM and question answering on SQuAD. Our adaptation of UCB \cite{bates2021distribution} (\autoref{prop:ucb}) outperforms the LTT \citep{angelopoulos2021learn} approach in CALM by yielding larger efficiency gains under the same risk control assurances. Shading denotes the standard deviation across $S=100$ calibration/test splits.
  }
  \label{fig:calm_all}
\end{figure}

\begin{figure}[!h]
  \centering
  \includegraphics[width=0.85\textwidth]{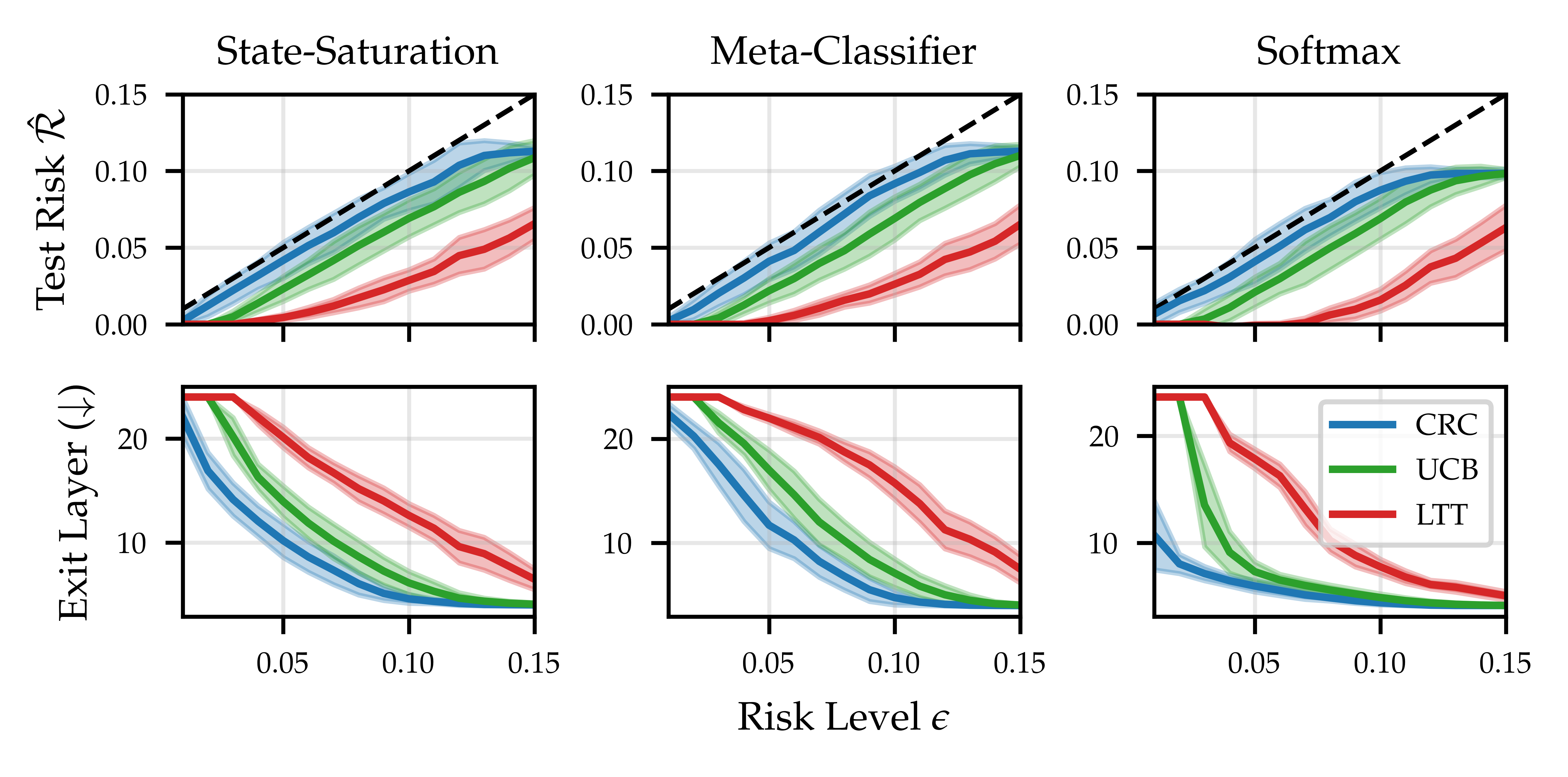}
  \caption{
    Empirical test risk (\emph{top}) and efficiency gains (\emph{bottom}) for the CALM model for text summarization on CNN/DM across different confidence measures (see \citet{schuster2022confident}, §3.5). \emph{From left to right}: Hidden state saturation, meta-classifiers, and top-2 softmax difference. Our employed risk control frameworks based on CRC and UCB continue to outperform LTT across all measures of confidence. Shading denotes the standard deviation across $S=100$ splits.
  }
  \label{fig:calm_conf_measures}
\end{figure}

\newpage

\subsection{Image Generation with Early-Exit Diffusion}
\label{app:exp-diff}

\begin{figure}[h]
  \centering
  \includegraphics[width=\textwidth]{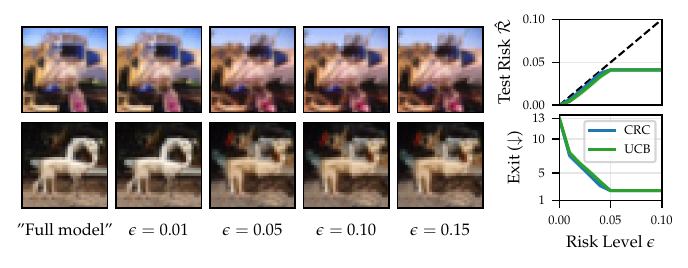}
  \caption{Results for early-exit diffusion with DeeDiff \citep{tang2023deediff} on CIFAR \citep{krizhevsky2009learning}.  \emph{Right:} Empirical test risks are controlled for both CRC (\autoref{prop:crc}) and UCB (\autoref{prop:ucb}) (for calibration set size $n=500$).  \emph{Left:} The quality of generated images is directly related to the targeted risk control level $\epsilon$.}
  \label{fig:deediff_risks}
  \vspace{-4mm}
\end{figure}


\end{document}